\newcommand{\p}{\mathbb{P}}
\newcommand{\e}{\mathbb{E}}
\newtheorem{theorem}{Theorem}
\crefname{section}{Sec.}{Secs.}
\Crefname{section}{Section}{Sections}
\Crefname{table}{Table}{Tables}
\crefname{table}{Tab.}{Tabs.}
\newcommand{\AlgName}{\textsc{brep-MI}\xspace}
\newcommand\tablescale{0.9}
\begin{document}

%%%%%%%%% TITLE - PLEASE UPDATE
\title{Label-Only Model Inversion Attacks via Boundary Repulsion}

\author{
Mostafa Kahla\\
Virginia Tech\\
{\tt\small kahla@vt.edu}
\and
Si Chen\\
Virginia Tech\\
{\tt\small chensi@vt.edu}
\and
Hoang Anh Just\\
Virginia Tech\\
{\tt\small just@vt.edu}
\and
Ruoxi Jia\\
Virginia Tech\\
{\tt\small ruoxijia@vt.edu}
}
\maketitle

%%%%%%%%% ABSTRACT

\begin{abstract}
%Many machine learning models are trained on private datasets that contains sensitive information (such as face images or medical data). Recently, the concept of Model Inversion Attack "MIA" was introduced to reconstruct private data by finding a datapoint or distribution of datapoints that maximizes the likelihood for a certain label. Typically, the proposed attacks in literature rely on having either complete access to the attacked model or access to the soft-labels produced by the model and then apply gradient descent to get the representative data point for each class. However, there has been no previous work on the harder but more practical scenario where the attacker only has access to model's predicted label. In this case, the attacker cannot infer the likelihood of any point.
%, and then perform gradient descent to obtain the representative data point for each class.
%In this case, the attacker would not have any knowledge on how 
%Indeed, the problem of finding a datapoint that has the maximum likelihood for a class 
%while the only information we can get from the attacked model is the predicted label 
%with only the information that whether this datapoint belong to this class or not
%with only predicted label
%can be a hard task (because we cannot estimate gradient). 
%we present a novel technique that is model agnostic and only needs the label of the model.

Recent studies show that the state-of-the-art deep neural networks are vulnerable to model inversion attacks, in which access to a model is abused to reconstruct private training data of any given target class. Existing attacks rely on having access to either the complete target model (whitebox) or the model's soft-labels (blackbox). 
% Machine learning models are often trained on private datasets such as face images. 
% Recently, the concept of Model Inversion (MI) Attacks was introduced to reconstruct private training data by estimating the datapoint or distribution of datapoints that maximizes the likelihood of a particular label.
% Typically, the proposed attacks in the literature rely on having either complete access to the attacked model (whitebox) or only access to the model's soft-labels (blackbox).
However, no prior work has been done in the harder but more practical scenario, in which the attacker only has access to the model's predicted label, without a confidence measure. 
In this paper, we introduce an algorithm, Boundary-Repelling Model Inversion (\AlgName), to invert private training data using only the target model's predicted labels. The key idea of our algorithm is to evaluate the model's predicted labels over a sphere and then estimate the direction to reach the target class's
 centroid. Using the example of face recognition, we show that the images reconstructed by \AlgName successfully reproduce the semantics of the private training data for various datasets and target model architectures. We compare \AlgName with the state-of-the-art whitebox and blackbox model inversion attacks and the results show that despite assuming less knowledge about the target model, \AlgName outperforms the blackbox attack and achieves comparable results to the whitebox attack.

% tackle this problem and propose an algorithm where the attacker has limited information from the model. Our attack works by estimating the decision boundary for a label in another latent space and then estimates the centroid for this label. 
% We first provide theoretical analysis for our technique.
% Furthermore, we evaluate our method qualitatively and quantitatively and show that it outperforms some whitebox attack methods, and provides comparable results with the current SOTA whitebox attack.
% in this paper, we present a novel zero order optimization algorithm that can estimate the gradient of a function (even when the function output is not a number.) even when the function output is class or label.
\end{abstract}
\vspace{-0.8em}

\section{Introduction}
% why label-only -> advantage and challenge -> contribution
Machine learning (ML) algorithms are often trained on private or sensitive data, such as face images, medical records, and financial information. Unfortunately, since ML models tend to memorize information about training data, even when stored and processed securely, privacy information can still be exposed through the access to the models~\cite{rigaki2020survey}. Indeed, the prior study of privacy attacks has demonstrated the possibility of exposing training data at different granularities, ranging from ``coarse-grained" information such as determining whether a certain point participate in training~\cite{shokri2017membership,long2018understanding, nasr2019comprehensive, hayes2019logan} or whether a training dataset satisfies certain properties~\cite{ganju2018property, melis2019exploiting}, to more ``fine-grained'' information such as reconstructing the raw data~\cite{fredrikson2015model,aivodji2019gamin,zhang2020secret,chen2021knowledgeenriched}. 

In this paper, we focus on model inversion (MI) attacks, which goal is to recreate training data or sensitive attributes given the access to the trained model. MI attacks cause tremendous harm due to the ``fine-grained" information revealed by the attacks. For instance, MI attacks applied to personalized medicine prediction models result in the leakage of individuals' genomic attributes~\cite{fredrikson2014privacy}. Recent works show that MI attacks could even successfully reconstruct high-dimensional data, such as images. For instance, \cite{zhang2020secret,chen2021knowledgeenriched,fredrikson2015model,yang2019adversarial} demonstrated the possibility of recovering an image of a person from a face recognition model given just their name.

Existing MI attacks have either assumed that the attacker has the complete knowledge of the target model or assumed that the attack can query the model and receive model's output as confident scores. The former and the latter are often referred to as the whitebox and the blackbox threat model, respectively. The idea underlying existing whitebox MI attacks~\cite{zhang2020secret,chen2021knowledgeenriched} is to synthesize the sensitive feature that achieves the maximum likelihood under the target model. The synthesis is implemented as a gradient ascent algorithm. By contrast, existing blackbox attacks~\cite{aivodji2019gamin, razzhigaev2020black} are based on training an attack network that predicts the sensitive feature from the input confidence scores. Despite the exclusive focus on these two threat models, in practice, ML models are often packed into a blackbox that only produces hard-labels when being queried. This label-only threat model is more realistic as ML models deployed in user-facing services need not expose raw confidence scores. However, the design of label-only MI attacks is much more challenging than the whitebox or blackbox attacks given the limited information accessible to the attacker.

\begin{figure}[!htbp]
  \centering
  \includegraphics[width=0.99\linewidth]{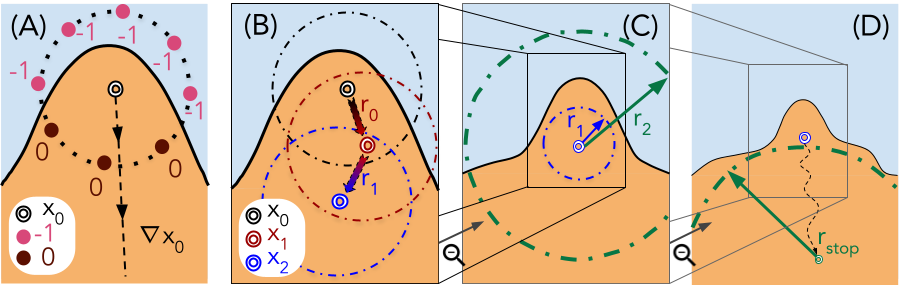}
  \caption{Intuitive explanation of \AlgName. (A) Query the labels over a sphere and estimate the direction on the sphere that can potentially lead to the target label class. (B) Update the synthesized image according to the estimated direction. Alternate between the estimation and update until the sphere fits into the target class. (C) Increase the radius of the sphere. (D) Repeat the steps above until the attack hits some query budget.
%   Fig (A) shows estimating the gradients based on sampled points. Fig (B) illustrates the update steps untill all sphere lies inside the target class. Fig (C) shows the case when the sphere radius increases. Fig (D) shows the case   
}
  \label{fig:example_attacks}
\end{figure}

In this paper, we introduce, \AlgName, a general algorithm for MI attack in the label-only setting, where the attacker can make queries to the target model and obtain hard labels, instead of confidence scores. Similar to the main idea of whitebox attacks, we still try to synthesize the most likelihood input for the target class under the target model. However, in the label-only setting, we cannot directly calculate the gradient information and leverage it to guide the data synthesis. Our key insight to resolve this challenge is that a high-likelihood region for a given class often lies at the center of the class and is far away from any decision boundaries. Hence, we design an algorithm that allows the synthesized image to iteratively move away from the decision boundary, as illustrated in Figure~\ref{fig:example_attacks}. Specifically, we first query the labels over a sphere and estimate the direction on the sphere that can potentially lead to the target label class (A). We progressively move according to estimated directions until the sphere fits into the target class (B). We then increase the radius of the sphere (C) and repeat the steps above until the attack hits some query budget (D). We theoretically prove that for linear target models, the direction estimated from hard labels queried on the spheres aligns with the gradient direction. We empirically show that \AlgName can also lead to successful attacks against deep neural network-based target models. In particular, the efficacy of the attack is even higher than the existing blackbox attacks and comparable to the existing whitebox attacks.

% label-only MI attacks, which further restrict the adversary's access to the model by allowing the adversary to perform attacks on the trained ML model while only having access to the model's predicted labels or classes (we will use the two words interchangeably for the rest of the paper). It is also much more challenge as the attack cannot be solved as an optimization problem directly.
% \todo{For Kahla: fill in the approach here}

Our contributions can be summarized as follows: \textbf{(1)} We propose the first algorithm for label-only model inversion attacks. \textbf{(2)} We provide theoretical justification for the algorithm in the linear target model case by proving the updates used in our algorithm align with the gradient and also analyze the error of alignment for nonlinear models. \textbf{(3)} We evaluate the attack on a range of model architectures, datasets, and show that despite exploiting less information about the target model, our attack outperforms the confidence-based blackbox attack by a large margin and achieves comparable performance to the state-of-the-art whitebox attack. 
% \textbf{(4)} We show that differential privacy, a “gold standard” privacy notion nowadays, can only defend against our attack at a high cost of accuracy drop. The paper opens up a tantalizing question of how to optimize the privacy-utility tradeoff for MI attacks. Answering this question is an important future work.
Besides, we will release the data, code, and models to facilitate future research.
% \footnote{Codes are available in the supplementary material and will be available online upon publishing}

% \begin{enumerate}
%     \item present a way to perform label only mia (rephrase)
%     \item we show that our technique provides competitive results with SOTA white box models under both low and high distributional shift. (rephrase)
%     \item we open source our code.\footnote{Codes are available in the supplementary material and will be available online upon publishing}
% \end{enumerate}

% \todo{For Kahla: fill in the contributions here}

\section{Related Work}

\paragraph{Model Inversion Attacks.} Model inversion attempts to reconstruct from partial up to full training sample. Typically, MI attacks can be formalized as an optimization problem, which goal is to find the sensitive feature value that achieves the highest likelihood under the model been attacked. However, when the target model is a deep neural network (DNN) or the private data lie in high-dimensional space, such optimization problem becomes non-convex and directly solving it via gradient descent may result in poor attack performance \cite{fredrikson2015model}; for example, when attacking a face recognition model, the recovered images are blurry and do not contain much private information. Recent work \cite{zhang2020secret} proposes a GAN-based MI attack method which is effective on DNNs. In particular, they learn a generic prior from public data via GAN and solve the optimization problem over the latent space rather than the unconstrained ambient space. However, their attack method does not fully exploit private information contained in the target model at the stage of training GAN. \cite{chen2021knowledgeenriched} significantly improves the attack performance through a special design of GAN which can distill knowledge from the target model; as a result, the generated images align better with the private distribution. They further improve the performance by ensuring that both the recovered image and its neighboring images have high likelihood. 
% \hj{This sentence is a bit confusing. I guess that "only A or B have high..." sounds easier}
% Furthermore, they improve the attack performance propose to recover the distribution instead of a single image of the target identity,
% which is more realistic as images associated with an identity should come from the same distribution.
While \cite{zhang2020secret,chen2021knowledgeenriched} achieve success on attacking various models and datasets, their attacks rely on whitebox access to the model. In many cases, the attacker can only make prediction queries against a model, but not actually download the model, which motivates the study of blackbox MI attacks.
% as the attacker usually only have API access. 
\cite{yang2019adversarial} analyzes the blackbox setting and proposes an attack model which swaps the input and prediction vector of the target model to perform model inversion. \cite{aivodji2019gamin} proposes to train a GAN and a surrogate model simultaneously, with the GAN generating inputs that resemble private training data and the surrogate model mimicking the target model's behavior. All of the blackbox attacks make an assumption that prediction confidences of the target model are revealed to the attacker. However, it is more practical in real-world setting that an adversary, who only makes queries to the model, can only obtain the hard labels, without confidence scores.
From this aspect, we aim to provide an effective MI attack method that only requires access to the hard label, which we refer to as label-only MI attacks.

% 3) Apart from inferring about membership of a specific data in the training of a target model, adversary can actually reconstruct from partial up to full training sample. Initial \textbf{model inversion MI} attacks were ineffective \cite{fredrikson2015model} in recreating the training images, often being blurry. Modern model inversion attacks take advantage of public image datasets \cite{zhang2020secret,chen2021knowledgeenriched} to achieve high quality, high success inverted training images. However, such attacks are applied in whitebox setting. \cite{yang2019adversarial, aivodji2019gamin} perform model inversion attack in a blackbox setting but with access to prediction confidences. Undoubtedly, it is more practical in real-world setting that an adversary who only have API access to the model, can only obtain the hard labels from the company. From this aspect, we aim to provide an effective \AlgName attack method that only requires access to the hard label.
%
%
%
% Privacy attacks
% --> Membership Inference
% --> Property Inference
% --> Model Inversion
%
% Defense - Differential Privacy
% --> Add More References

\vspace{-0.9em}

\paragraph{Other Privacy Attacks.} 
% Machine learning models though powerful in analyzing data are vulnerable to external attacks. 
Asides from MI, there are two other categories of privacy attacks that allow adversaries to gain unauthorized information from the target model and its data. 
% In \textbf{model extraction} attack, the adversary tries to recreate the target model by building a substitute model, which can properly simulate the same task of the target model \cite{tramer2016stealing} or replicate decision boundaries \cite{nasr2019comprehensive}. Apart from rebuilding the model, there are works \cite{wang2018stealing} that focus on retrieving hyper-parameters of the target model's objective function or other architectural properties. 
% Though, model extraction attacks can also be performed in a blackbox setting, the adversary still needs access to probabilities output or mutual fine-tuning, and requires specific models \cite{orekondy2019knockoff, krishna2019thieves, hitaj2018have} to attack. 
In a membership inference attack, the attacker attempts to evaluate whether a certain point is used in the target model's training. This attack technique was introduced by \cite{shokri2017membership} who created multiple shadow models to estimate the target model. \cite{long2018understanding, nasr2019comprehensive, hayes2019logan} pointed out that the membership inference attack exploits the overfitting of specific data points. Interestingly, \cite{choquette2021label} performs a membership inference attack under same setting as our \AlgName attack and notes that the viable defense against such an attack is via differential privacy (DP). DP~\cite{dwork2014algorithmic, abadi2016deep} ensures that the trained model is stable to the change of any single record in the training set. 
% However, a differentially private model's 
% guarantee that the model can still detect statistical properties from population without learning about individuals. However, \cite{choquette2021label} state that 
However, with differential privacy, the target model's test accuracy will significantly degrade. Additionally, property inference attacks aim to infer from the properties about the training dataset~\cite{ganju2018property}. Compared to these attacks, MI is arguably more challenging as the information it attempts to recover is higher in resolution.

\section{Threat Model}
% what is MIA in general

\paragraph{Attack goal.} In MI attacks, given the access to a target model $f: [0,1]^d\rightarrow \mathbb{R}^{|C|}$ and any target class $c^* \in C$, the attacker attempts to recover a representative point $x^*$ of the training data from the class $c^*$; $d$ represents the dimension of the model input; $C$ denotes the set of all class labels and $|C|$ is the size of the label set. For example, an attack on the face recognition classifier would try to recover the face image for a given identity based on the access to the classifier. 
% More formally, we can write the attack goal as the following optimization problem that seeks for an input $x^*$ that maximizes the confidence for class $c^*$.
% \begin{align}
%   x^*=\arg \max_x f_{c^*}(x)
% \end{align}
% where $f_c(x)$ denotes the probability that input image $x$ belongs to class $c$. 
\vspace{-2em}

\paragraph{Model knowledge.} The attacker's knowledge about a target model can take different forms: (i) Whitebox: complete access to all target model parameters; ii) Blackbox: access to the confidence scores output by the target model; and iii) Label-only: access to only the hard labels output by the model without the confidence scores. Our paper will focus on the label-only setting. Specifically, given the target network $f$, the attacker can query the target network at any input $x$ and obtain the corresponding predicted label $\hat{y}(x) = \arg \max_{c \in C } f_c(x)$.

\vspace{-1em}
\paragraph{Task Knowledge.} For the rest of the paper, we  assume that the attacker has knowledge about the task that the target model performs. This is a reasonable assumption, since this information is available for existing online models, or can be inferred from output labels.

\vspace{-1.1em}

\paragraph{Data Knowledge.} Since we assume that attackers know the task of the attacked model, it is reasonable to assume that they can gain access to a public dataset from a related distribution.
% a more generic public distribution of the private dataset. 
For example, if attackers know that the target model is trained to perform facial recognition, they can easily gather a public dataset by leveraging the existing open-sourced datasets or crawling data from the web. Throughout the paper, we assume that the public data and the private data do not share any classes (e.g., identities) in common.

% distill knowledge from more generic distribution by crawling the web and gathering facial images. This prior knowledge will play an important role in our attack algorithm in the next subsection.

\vspace{-1.1em}
\paragraph{Target models.} Our approach neither makes assumption on the target model architecture, nor requires the attacker to have any information about it. In other words, our approach is model-agnostic. We will empirically show in Section \ref{sec:experiments} that our \AlgName attack generalizes to a variety of models with different architectures and sizes.

\vspace{-1.1em}
\paragraph{Target labels.} The attack can be \emph{targeted}, when the goal is to find $n$ input images that maximize a set of $n$ \emph{predefined} labels, or \emph{untargeted}, when the goal is to find $n$ input images that maximize a set of \emph{any} $n$ labels. The proposed algorithm can apply to \emph{both} scenarios. In our evaluation, we will focus on the more challenging scenario, where the attack is targeted for $n$ specific labels.

% However, we will only focus on the more restrictive scenario, where our attack is targeted for $n$ specific labels.

\section{Algorithm Design}
\label{sec:Algorithm-design}
In this section, we will present the design of our proposed algorithm \AlgName. We will start by formulating the MI attack as an optimization problem. Then, we describe an algorithm to estimate the gradient of the MI optimization objective based only on predicted labels. We will rigorously characterize the alignment between the estimate and the true gradient for the special case of linear models and provide insights into the attack efficacy for deep, nonlinear models.

\subsection{Problem Formulation}
% the problem of MIA in the label only setting
%In this section, we present our algorithm to perform label-only MI attack. We introduce a gradient estimator that is able to estimate the gradients of a model even when the attacker has only access to the model's predicted class. However, one limitation of this estimator is that it can only optimize linear functions. To overcome this limitation, we perform the attack using our estimator on a different vector space than the original image input space. This vector space is linear to the target model's decision and hence, our gradient estimator can be applied. After that, we map the point with the maximum likelihood in the new vector space back again to image space. (rephrase)

Without loss of generality, we state the attack problem formulation for a single target label and define $M_{c^*}: \mathbb{R}^d \rightarrow \mathbb{R}$ such that
\begin{equation}
    %M_{c'}(x) = f_{c'}(x) - \arg \max_{c \in C} f_c(x) 
    M_{c^*}(x) = f_{c^*}(x) - \max_{c \neq c^*} f_c(x), 
    \label{eqn:opt-over-x}
\end{equation}
where $c^*$ is the target label. $M_{c^*}(x)$ represents the logit (or confidence score) difference between the target class $c^*$ and the most likely label in the rest of the classes. Note that when $x$ is predicted into the target class (i.e., $c^*=\arg\max_{c\in C} f_c(x)$), $M_{c^*}(x)>0$. Clearly, the most representative input for the target class $c^*$ should be most distinguishable from all the other classes. Hence, we cast the MI problem into an optimization problem that seeks for the input that achieves maximum difference between the confidence for the target class and the highest confidence for the other classes:
\setlength{\abovedisplayskip}{4pt}
\begin{align}
    \arg\max_{x\in [0,1]^d}  M_{c^*}(x).
\end{align}

% the likelihood of the input $x\in[0,1]^d$ to be classified as class $c^*$. % you can add: this would be the function that we would estimate its gradients. 
However, for images, $x$ usually lies in a high-dimensional continuous data space and optimizing over this space can easily get stuck in local minima that do not correspond to any meaningful images.
% while producing semantically meaningless results  even when the optimization is done in the whitebox setting.
%Our gradient estimator provides correct gradients only for linear functions, % our method wont work when the target model is a non-linear model which is usually the case
To resolve this issue, we leverage the idea in~\cite{yang2019adversarial, zhang2020secret,chen2021knowledgeenriched} and optimize over a more semantically meaningful latent space. This is done by using a public dataset to train GAN models and then optimizing over the input to the GAN generator. Denote the publicly trained generator by $G(z)$, where $z \in \mathbb{R}^{d'}$ and $d' < d$. Now, the MI optimization problem can be updated to reflect the change of optimizing $z$ rather than $x$  as follows:
%\begin{align}
%   L_\text{unsupervised}=-\{&\mathbb{E}_{x \sim p_{\text{data }}(x)} \log D(x)+ \\
%   &\mathbb{E}_{z \sim \text{noise}} \log (1-D(G(z)))\}.
%\end{align}
% We update Equation \ref{eqn:opt-over-x} to reflect the change of optimizing z rather than x.   % $$ Therefore we have
\begin{equation}
    %M_{c'}(x) = f_{c'}(x) - \arg \max_{c \in C} f_c(x) 
   \arg\max_{z\in \mathbb{R}^d} M_{c^*}(G(z)).
\end{equation}
% $G(z):\mathbb{R}^{d'} \rightarrow \mathbb{R}^d$ is the function that maps the point z to its corresponding point image domain.
Unlike the whitebox setting, we cannot directly optimize $M_{c^*}(G(z))$ using gradients as we do not have access to the model parameters $f$. Moreover, it is also not possible to apply zero-order optimization algorithms, as they require access to the confidence scores output by the model.

% We cannot directly optimize $ M_{c'}(z)$ as we do not have access to logits ($f_{c'}(G(z))$ and $f_c(G(z))$) of the model. 
%We next present our algorithm.
\subsection{\AlgName Algorithm}
%  The objective of MI attack is to find a point that is a good representative for a class. However, for the label-only setting, the attacker does not have any knowledge about the confidence scores, soft labels or the model parameters. Consequently, the attacker cannot perform a gradient search over the model's parameters, as in whitebox attacks, to find class representative (i.e.,the input that maximizes the likelihood for the target class). Similarly,  its not possible also to apply a zero order optimization (which requires the model's decision scores).
 
The intuition behind our algorithm is that the farther a point is from the decision boundary of a class, the more representative this point becomes to the class. Thus, the centroid of any class should be its good representative. 
Inspired by this, we design an algorithm which tries to gradually move away from the decision boundary. In a high level, our algorithm proceeds by first sampling points over a sphere and then querying their labels. Intuitively, the points that are not predicted into the target class represent the directions that we want to move away from. Hence, we take an average over those points and move in the direction opposite to the average. If all the points are predicted into the target class, then we will increase the radius.

% present \AlgName: an algorithm which estimates the gradients of a model only by looking at its label. 

% Given a point inside the target class, \AlgName operates by uniformly sampling points over a sphere of specific radius $R$. The value of $R$ should be chosen to be sufficiently large to touch the boundary. Then based only on the prediction classes of the sampled points, we estimate the direction that get this point further away from the boundary. We show later in the paper that the gradient estimation aligns with the true gradient for linear models. Additionally, it performs very well empirically even for  deep neural networks with advanced and huge architectures.

Let $\text{sign}(\cdot)$ be a function that returns $1$ if the input is positive and $-1$, otherwise. We define $\Phi_{c^*}: \mathbb{R}^d \rightarrow \{-1,0\}$:
\begin{align}
\small
  \Phi_{c^*}(z) &= \frac{\text{sign} (M_{c^*}(z)) - 1}{2} \\
  &= \left\{\begin{array}{ll}
     & 0, \text{ if } c^* = \arg \underset{c \in C}{\max} f_c(G(z))\\
     & -1, \text{ otherwise.}
\end{array}\right.
\end{align}
Essentially, $ \Phi_{c^*}(z)$ marks points that are not predicted into the target class. Then, we define our gradient estimator as
\begin{equation}
    \widehat{M_{c^*}}(z, R) = \frac{1}{N}\sum_{n=1}^N \Phi_{c^*}(z+Ru_n)u_n,
    \label{eqn:gradient-estimator}
\end{equation}
where $u_n$ is a uniformly random point sampled over a $d'$-dimensional sphere with radius $R$ and $N$ is the number of points sampled on the sphere. Note that $\widehat{M_{c^*}}(z, R)$ can be calculated in the label-only setting as it only requires the knowledge of predicted label of the sampled points. We will then use $\widehat{M_{c'}}(z, R)$ to update $z$:
\begin{equation}
    z\leftarrow z+ \alpha \widehat{M_{c^*}}(z,R),
    \label{eqn:update-point}
\end{equation}
where $\alpha$ is the update step size. It can be either a fixed value or a function of the current radius $R$. When all points sampled from the sphere of the current radius are predicted into the target class, i.e., $ \Phi_{c^*}(z+Ru_n)=0$ for all $n=1,\ldots,N$, then we increase the radius and alternate between estimating $\widehat{M_{c'}}(z, R)$ using Eq. (\ref{eqn:gradient-estimator}) and performing update with Eq. (\ref{eqn:update-point}) at the new radius.

The pseudo-code of \AlgName is provided in Algorithm~\ref{alg:main}. \AlgName starts with the initial point correctly classified as the target class. To ensure this, images are sampled from the GAN until a point belonging to the target class is generated. Note that the initial point, although classified into the target class, is almost \emph{never} a representative point for the target class (see more examples in Fig.~\ref{fig:radius}). The radius of the sphere is initialized to a reasonably small value. Then, the algorithm will try to move away from the decision boundary iteratively. At each iteration, we sample $N$ points on the sphere with radius $R$ centered at the current point and query their labels from the target model. If all the points are classified into the target class, the radius will be enlarged; otherwise, we estimate $\widehat{M_{c'}}(z, R)$ using Eq. (\ref{eqn:gradient-estimator}) and update $z$ according to Eq. (\ref{eqn:update-point}). Note that the update is reverted if the new point $z$ lies outside the target class. In that case, we will resample the points on the sphere and compute a new update. The algorithm will be halted when it is not possible to find a larger sphere such that all the samples on that sphere fall into the target class. The output of the algorithm is a point ($z^*$) with the largest sphere that can fit into the target class. This indicates that the point is the farthest from the boundary. We will use this point to evaluate the attack.

% hs all its sampled sphere points lie inside the target class. This indicates that this point is the furthest from the boundary. This is the point that we use for the evaluating the attack in our qualitative and quantitative.

% where N and $R_0$ are hyperparameters chosen by the attacker. 

% If all the sampled points are inside the target class, $R_i$ is increased by a factor of $\gamma$ (which is also a hyperparameter) and the next iteration starts. Otherwise, the current point is updated using equation \ref{eqn:update-point} (the update is reverted if the new point lies outside the class boundary) . The algorithm stops when a predefined number of iterations pass without being able to increase $R_i$. 

%% \textcolor{red}{The psuedo-code still needs to be revised}

\begin{algorithm}[ht]
\SetAlgoLined
\SetKwInOut{Input}{input}
\SetKwInOut{Output}{output}
\SetKwInput{Ensure}{ensure}
\Input{Target model's hard-label prediction $\hat{y}$ ; target class $c^*$, number of samples $N$; number of maximum iterations $maxIters$; initial sphere sampling radius $R_0$; radius multiplier $\gamma$; data point learning rate $\alpha$}
\Output{Representative sample $z^*$ for $c^*$.}
\Ensure{A sample $z$ in the target class $c^{*}$ by repeatedly sampling from the GAN's latent space.} %sample random3
%points till one point lies in the target class.\\
%\tcc{3.Solving outer optimization via implicit hypergradient}
Set $R \leftarrow R_0$.\\ %set current radius r to initial radius size
 %sample N points on the sphere with radius r 
Set $iters \leftarrow 0$.\\
Set $points \leftarrow vector(N)$\\
% Set $grad \leftarrow 0$\\
\While {iters $<$ maxIters} { %while len iterations less than max allowed iters per radius
    %Sample N points on a sphere with radius R.\\ %get the model decision for the N sampled points
    $points \leftarrow \text{random N points on a sphere r=R}$\\
    \small{\tcp{ Check if all sampled points are in target class.}}
    \eIf{ $points$ in $c^*$}{
        \small{\tcp{Update radius and current best point }}
        $R \leftarrow R\times \gamma$ .\\
        $z^* \leftarrow z$ .\\
        $iters \leftarrow 0 $.\\
    }
    {
        Compute $\widehat{M_{c^*}}(z,R)$ via Eq. (\ref{eqn:gradient-estimator})\\
        % $gradient \leftarrow mean( \Phi_{c^*}(points))$\\
        $z_\text{new}\leftarrow$ the RHS of Eq. (\ref{eqn:update-point})\\
        \If{if  $\hat{y}(z_\text{new})=c^*$} {
            $z\leftarrow z_\text{new}$
        }
        % $z \leftarrow z + \alpha * grad$ if $z + \alpha * grad $ in $c^*$
    }
}
 %if X% of N sampled points lie inside the class boundary, we save this point as our current best point and inc radius by radius expansion coefficient. and go to sampling step (3)
 %else: we estimate the new gradient direction and update our point accordingly
\caption{ \AlgName Decision-Based Zero Order Optimization Algorithm.}
\label{alg:main}
\end{algorithm}

\subsection{Attack Justification}
\label{sec:analysis}

As our gradient estimator $\widehat{M_{c^*}}(z, R)$ repels non-target-class points, intuitively, it points towards the direction that increases the target class' likelihood. We provide a theorem that characterizes the alignment between the proposed estimator and the true gradient $\nabla M_{c^*}\left(z\right)$ for special cases of linear classification models (e.g., logistic regression).

% We provide a theorem to support our choice of gradient estimator $\widehat{M_{c'}}(z, R)$ with the general error analysis. The proof justifies our algorithm that with increasing radius $R$ of the sampling sphere, our gradient estimator $\widehat{M_{c'}}(z, R)$ will eventually converge to the true gradient $\nabla M_{c*}\left(z_{t}\right)$.

% Furthermore, in the blackbox setting with the access to hard labels, the adversary can directly calculate the value of $\Phi_{c^*}(z)$ which further reduces the error analysis. The proof of this simplified case is given in Corollary 2 in the Appendix 1.2 due to the page limit.

\begin{theorem}
\label{thm:linear}
Assume $f$ has a linear classification model. Let $z$ be an arbitrary point within the target class, i.e. $M_{c^*}(z)>0$. Then, the cosine of the angle between $\mathbb{E}[\widehat{M_{c^*}}(z, R)]$ and $\nabla M_{c^*}\left(z\right)$ is bounded by 
\begin{align}
\cos \angle\left(\mathbb{E}\left[\widehat{\nabla M_{c^*}}\left(z, R\right)\right], \nabla M_{c^*}\left(z\right)\right) \\
\geq 1- \mathcal{O}\bigg(\frac{M_{c^*}\left(z\right)^2  (d-1)^2}{\delta^2 R^2 \left\| \nabla M_{c^*}\left(z\right) \right\|_{2}^2}\bigg).
\end{align}

Therefore, with increasing radius $R$,
$$
\lim _{R \rightarrow \infty} \cos \angle\left(\mathbb{E}\left[\widehat{M_{c^*}}\left(z, R \right)\right], \nabla M_{c^{\star}}\left(z\right)\right)=1,
$$
which tells that the estimator is asymptotically unbiased for gradient estimation.
\end{theorem}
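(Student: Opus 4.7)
The plan is to rewrite $\mathbb{E}[\widehat{M_{c^*}}(z, R)]$ as a geometric integral over a portion of the sphere, use rotational symmetry to pin down the dominant direction, and then bound any deviation contributed by other competitor hyperplanes. Since $f$ is linear, write $f_c(x) = w_c^\top x + b_c$; at the smooth point $z$ let $c' = \arg\max_{c\neq c^*} f_c(z)$, so that $\nabla M_{c^*}(z) = w := w_{c^*} - w_{c'}$, and on the neighborhood where $c'$ remains the active competitor, $M_{c^*}(x) = w^\top x + b$ with $M_{c^*}(z) > 0$. Taking the $u_n$ to be uniform on the unit sphere (so that $z+Ru_n$ lies on the sphere of radius $R$), the definition of $\Phi_{c^*}$ gives
\begin{equation*}
\mathbb{E}[\widehat{M_{c^*}}(z,R)] \;=\; -\,\mathbb{E}_u\!\left[\mathbf{1}\{z+Ru \text{ not in class } c^*\}\,u\right],
\end{equation*}
so that, up to a sign, the estimator's expectation is the center of mass of the portion of the sphere falling outside the target class.

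For the active competitor this excluded region is the spherical cap $\{u : \hat{w}^\top u \leq -M_{c^*}(z)/(R\|w\|)\}$. The main computation is to evaluate the resulting cap integral: by rotational symmetry of the uniform measure on the sphere about the axis $\hat{w}$, every coordinate orthogonal to $\hat{w}$ integrates to zero, while the $\hat{w}$-component reduces to $\int_{-1}^{s} t\, p(t)\, dt$ with marginal density $p(t) \propto (1-t^2)^{(d-3)/2}$ and $s = -M_{c^*}(z)/(R\|w\|)$. Since $s<0$, this integral is strictly negative, so after the leading minus sign the primary cap contributes a \emph{positive} multiple of $\hat{w} = \nabla M_{c^*}(z)/\|\nabla M_{c^*}(z)\|$. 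If only $c'$ were active, the cosine would already be exactly $1$.

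The main obstacle, and the source of the $\delta$-dependent error, is that the sphere may also cross decision hyperplanes for other competitors $c'' \neq c'$. Letting $\delta$ denote the margin at $z$ between $c'$ and the remaining non-target classes, the extraneous region on the sphere where some $c''$ becomes the active competitor has measure controlled by $R/\delta$, while each such point contributes a unit vector that need not align with $\hat{w}$. Bounding the transverse component by multiplying measure with cap thickness and the normalizing constant of $(1-t^2)^{(d-3)/2}$ — where the asymptotic concentration near the equator produces the $(d-1)$ factor — gives a transverse magnitude of order $M_{c^*}(z)(d-1)/(\delta R\|w\|)$. Converting via the elementary bound $\cos\theta \geq 1 - \tfrac{1}{2}\sin^2\theta$ then yields the stated $M_{c^*}(z)^2(d-1)^2/(\delta^2 R^2 \|\nabla M_{c^*}(z)\|_2^2)$ rate; the hardest book-keeping will be tracking this dimensional factor cleanly. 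Finally, taking $R\to\infty$ forces $M_{c^*}(z)/R \to 0$, so the primary cap approaches a half-sphere whose center of mass is exactly aligned with $-\hat{w}$ and the perturbation term vanishes, giving $\cos\angle \to 1$ and the claimed asymptotic unbiasedness.
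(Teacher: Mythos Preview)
Your approach diverges from the paper's in a fundamental way, and the divergence stems from your interpretation of $\delta$. In the paper's proof (adapted from the HopSkipJump analysis), $\delta\in(0,1)$ is a \emph{perturbation scale}: the query point is written $z+\delta R u$, and $M_{c^*}$ is treated as a single globally affine function throughout. The error term does not come from multi-class geometry at all; it is an artifact of deliberately coarsening the analysis. Concretely, the paper sets $w=M_{c^*}(z)/(\delta R)$ and partitions the sphere into an upper cap $E_1=\{\nabla M_{c^*}(z)^\top u>w\}$, a lower cap $E_3=\{\nabla M_{c^*}(z)^\top u<-w\}$, and an annulus $E_2$. On $E_1,E_3$ the value of $\Phi_{c^*}$ is pinned down; on $E_2$ it is simply left indeterminate, and its worst-case contribution is bounded by $2Rp$ with $p=\mathbb P(E_2)$. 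Bounding $p$ via the Beta$\big(\tfrac12,\tfrac{d-1}{2}\big)$ law of $\langle\hat w,u\rangle^2$ produces the $(d-1)$ factor and the stated rate. Your observation that ``if only $c'$ were active the cosine would already be exactly $1$'' is correct and shows the bound is loose in the genuinely affine case; the paper accepts this looseness so that the same scaffolding carries over verbatim to the nonlinear theorem.

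Your reading of $\delta$ as a margin between $c'$ and the remaining classes is therefore not what the paper means, and the multi-class argument you build on it has a gap. The cap where a secondary competitor $c''$ beats $c^*$ has threshold $(f_{c^*}(z)-f_{c''}(z))/(R\|w_{c''}-w_{c^*}\|)$, which tends to $0$ as $R\to\infty$, so that cap approaches a \emph{half-sphere}; it does not shrink. Hence the symmetric difference between the true excluded set and your ``$c'$-only'' cap has limiting measure bounded away from zero, and the claimed transverse magnitude of order $M_{c^*}(z)(d-1)/(\delta R\|w\|)$ cannot hold in general. In a genuine multi-class linear model the limiting direction of the estimator is determined by all the $w_c-w_{c^*}$, not just $w_{c'}-w_{c^*}$, so $\cos\angle\to 1$ relative to $\nabla M_{c^*}(z)=w_{c^*}-w_{c'}$ need not hold. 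To recover the paper's statement you should either specialize to the case where $M_{c^*}$ is globally affine (effectively binary), or abandon the margin interpretation and run the annulus argument above with $\delta$ as a scale parameter.
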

The proof is provided in the Appendix 1.1. Theorem~\ref{thm:linear} shows that as long as $R$ is large enough, the gradient estimator aligns well with the actual gradient. 
% The alignment analysis for the nonlinear boundary case remains an open question. Our thesis is that for deep neural networks, the alignment between the estimator and the actual gradient will persist. This thesis is supported by the superior attack performance as shown in the next section.

For the deep learning model with bounded nonlinearity, we can also derive the bound for the cosine of the angle between the estimate and the true gradient: $\cos \angle\left(\mathbb{E}\left[\widehat{\nabla M_{c^*}}\left(z, R\right)\right], \nabla M_{c^*}\left(z\right)\right) 
\geq 1- \mathcal{O}\bigg(\frac{\left[M_{c^*}\left(z\right)^2 + L^2\delta^4 R^4  + 4M_c(z)\delta^2LR^2 \right] (d-1)^2}{\delta^2 R^2 \left\| \nabla M_{c^*}\left(z\right) \right\|_{2}^2}\bigg)$, where $L$ characterizes the level of nonlinearity. A more formal statement of the result will be provided in Appendix. The result shows that with increasing $R$ the estimated gradient will align with the true gradient. However, after a certain point of inflection, increasing radius will only decrease the accuracy of the estimate. This occurrence is actually well presented in our experiments in Section \ref{sec:experiments}, which shows the ``sweet point" for reaching the radius R that maximizes the success rate. 
% However, we provide experimental evidence in the Section 5 which supports our algorithm choice for nonlinear case.

\section{Evaluation}
\label{sec:experiments}
% This section evaluates our attack algorithm through a series of experiments. Due to the critical nature of face identity reconstruction in private datasets, we focus on this kind of attack in our evaluations. 
Our evaluation aims to answer the following questions: (1) Can \AlgName successfully attack deep nets with different architectures and trained on different datasets? (2) How many queries does \AlgName require to perform a successful attack? (3) How does the distributional shift between private data and public data affect the attack performance? (4) How sensitive is \AlgName to the initialization and the sphere radius? In the main text, we will focus on a canonical application--face recognition--as our attack target. We will leave experiments on other applications to the Appendix.

% Our experiments demonstrate our approach's effectiveness on both a qualitative and quantitative level. Due to the absence of a label-only MI attacks in the literature, we use whitebox attacks and blackbox attacks as baselines. Although our attack operates under more strict assumptions, we still show that we outperform the blackbox baseline and a whitebox baseline and provide comparable results with the SOTA whitebox attack in literature and sometimes outperform it.

% For a complete evaluation, we evaluate our attack's effectiveness on a variety of different models with different architectures. Furthermore, we expand our experiments to include additional datasets and multiple different attack scenarios. For example, when the attacker has a limited budget of queries to the attacked model, or when the attacker's prioir knowledge has a large distributional shift from the private data. 

\subsection{Experimental Setup}
\paragraph{Datasets.} We experiment on three different face recognition datasets: CelebA \cite{liu2015deep}, Facescrub \cite{ng2014data}, and Pubfig83 \cite{pinto2011scaling}.
Similar to \cite{zhang2020secret,chen2021knowledgeenriched, yang2019adversarial} we crop the images of all datasets to the center and resize them to $64x64$. We split the identities into the public domain (which we train our GAN on), and the private domain (which we will train target models on). \emph{There are no overlapping identities between the public and the private domain. }This means that the attacker has zero knowledge about the identities in the private domain. We then perform the attack on the classifier that is trained on the private domain. The details about each dataset are shown in Table~\ref{tab:datasets-details}.
%For each dataset, we split it into public domain and private domain. Further, we split the private domain into train,val,test sets. We train the private classifier models on private domain. Those will be the target models we perform the attack on. We use images from public domain to train our GAN models.
To study the impact of a large distributional shift between private and public domain on the attack performance, we use the FFHQ dataset~\cite{karras2019style} as our public domain to train the GAN and the aforementioned three datasets as the private domains.

\begin{table}[ht!]
\centering
\scalebox{0.75}{
\begin{tabular}{lccccc}
\toprule
Dataset & \#Images & \#Total id & \#Public id & \#Private id & \# Target id\\
\midrule 
CelebA & 202,599 & 10,177 & 9,177 & 1,000 & 300 \\
Pubfig83 & 13,600 & 83 & 33 & 50 & 50 \\ 
Facescrub & 106,863& 530 & 330 & 200 & 200 \\ 
\bottomrule
\end{tabular}
}
\caption{\label{tab:datasets-details} Details on how we split datasets in evaluations into the public and the private domains.   }
\end{table}

\paragraph{Target Models.} In addition to evaluating our attack on a range of datasets, we also evaluate our attack on different models with a variety of architectures. To provide consistent results with the previous work, we use the same model architectures used in the state-of-the-art MI attack~\cite{chen2021knowledgeenriched}: (1) face.evoLve adpated from \cite{cheng2017know}; (2) ResNet-152 adapted from \cite{he2016deep}; and (3) VGG16 adapted from \cite{simonyan2014very}.
\vspace{-1em}
\paragraph{Evaluation Protocol.}
For all evaluations with \AlgName, we perform a targeted attack as it is a more challenging setting compared to untargeted attack. Following  \cite{zhang2020secret, chen2021knowledgeenriched}, we use the \emph{attack accuracy} to measure the attack performance. The attack accuracy is based on an evaluation classifier, which predicts the identities for reconstructed face images and is a proxy for human judge. Specifically, the attack accuracy is calculated by the ratio of the number of reconstructed images that are correctly classified into the corresponding target classes over the total number of reconstructed images. As the evaluation classifier reflects a human judge, it should have high performance. At the same time, it should be different from the attacked target models to avoid some semantically meaningless reconstructed images that overfits to the target models to be considered as good reconstructions. %\textcolor{red}{Add the details of how the evaluation classifiers are trained. What data are used for trianing?}

\vspace{-1em}
\paragraph{Hyperparameters.} We manually fine-tuned the hyperparameters of \AlgName in our evaluations. We found empirically that the best initial radius $R_0$ is $2$, the radius expansion coefficient $\gamma$ is 1.3, and the step size $\alpha_t$ = min($R/3$, 3). We choose N, the number of sampled points on a sphere, to be 32 unless otherwise specified. maxIters is chosen to be 1000, i.e., \AlgName terminates when more than 1000 iterations have passed for a certain $R$ without having all points on sphere to be classified as the target class.
%We apply our attack for each label separately
% table/graph shows the number of images generated to get different 300 targets.
%since there has been no work before on performing MI attack in the setting where we only have access to model decision,  

%-multi experiments on same model, all have same initial point for each iden. 
% exp: input space vs Z space of gan
% we use untargeted attack, why? untargeted should be discussed in prev section

% we use the prev baselines to attack the same targets that we attacked in untargeted scenario

%dataset, experiment process, metrics, hyperparameters, baselines

%private and public dataset split for each dataset.

\paragraph{Baselines.}
Since this is the first work that provides a solution to label-only MI attacks, we have no baselines to evaluate against.
We opt to evaluate against whitebox and blackbox attacks in which the attacker has a \emph{greater} advantage in terms of additional knowledge about target models. 
% Despite that, we still provided comparable results to the SOTA Model Inversion attack and outperformed the others.   
To ensure fair comparison, we apply all baselines over the same set of target identities for each dataset and the same target models. Then we evaluate attack accuracy against the same evaluation classifiers. Two of our baselines are white box attacks, including Generative Model Inversion (\textbf{GMI}) \cite{zhang2020secret}, which is the first MI attack algorithm against deep nets, and Knowledge-Enriched Distributional Model Inversion attack (\textbf{KED-MI}) \cite{chen2021knowledgeenriched}, which provides the currently state-of-the-art performance for whitebox MI. The GAN models in GMI is set to be the same as the GAN in our attack. KED-MI relies on the access to target model parameters in training the GAN models. However, we cannot access such information and train the same GAN in our setting. We also employ a blackbox attack~\cite{yang2019adversarial}, referred to as the learning-based model inversion (\textbf{LB-MI}) as one of our baselines. LB-MI builds an inversion model that learns to reconstruct images from the soft-labels produced by the target model. To reconstruct the most representative image for a given identity, we feed a one-hot encoding for that identity at the input of the inversion model and receive the output.
% To incorporate that to our attack model, we find the class representative by inverting a one hot vector with one at the attacked index.  

\subsection{Results}

\paragraph{Performance on Different Datasets.}

%We compare our decision based attack against whitebox and blackbox attacks on three different face recognition datasets. Table \ref{tab:diff-datasets} shows that our results greatly outperforms GMI and the blackbox attacks for all datasets. Moreover, our attack accuracy outperforms the state of the art whitebox attack on Pubfig83 and is  relatively very close to it on  CelebA dataset. However, we fall behind by 15\% on Facescrub dataset.
%It is worth noting that the results from these experiments show that there is still a big room for improvements for the other threat models in MI Attacks, especially the blackbox attacks( which performs poorly with respect to the other threat models). 
We compare \AlgName to whitebox and blackbox methods on the three different face datasets. We use FaceNet64 as the target model across all datasets. For each dataset, the GAN models are trained on its public identities, and target models are trained on the private identities. 
Table \ref{tab:diff-datasets} shows that our approach considerably outperforms both the whitebox GMI attack and the blackbox attack on all datasets.
Further, our method surpasses the state-of-the-art whitebox KED-MI attack on Pubfig83 and achieves a close attack accuracy on the CelebA dataset.
On the other hand, we fall behind by 15\% on the Facescrub dataset. It is worth noting that the outcome of this experiment implies that there is still a considerable potential for development in the other threat models in MI attacks, particularly blackbox attacks (which perform poorly with respect to the other threat models). The reason why GMI performs poorly even with the whitebox knowledge is that it optimizes the likelihood of only the synthesized data point without considering the neighborhood of the point. Hence, it is possible that optimization gets stuck in a sharp local maximum that does not represent the class.  On the other hand, both \AlgName and KED-MI explicitly finds a neighborhood with high likelihood, which turns out to be crucial to produce representative points and enhance attack performance. It is worth noting that the blackbox attack, although leveraging more knowledge about target model than our attack, consistently achieves the worst performance. Compared to the other attacks, the blackbox attack utilizes a very different idea for distilling knowledge from public datasets. It uses the public data to train the inversion model whereas the other attacks all train GAN models on the public data. The results suggest that GANs are more effective in distilling public knowledge than an inversion model. So a potential way to improve blackbox attack is to regularize the synthesized images via GAN.

\begin{table}[t!]
\small
\centering
\scalebox{\tablescale}{
\begin{tabular}{lcccc}
\toprule
\multirow{2}{*}{Dataset} & \multicolumn{2}{c}{[Whitebox]} & [Blackbox] & [Label-only]\\
&  GMI & KED-MI & LB-MI & \textbf{\AlgName}\\
\midrule 
CelebA & 32.00\% & 82.00\% & 1.67\% & 75.67\% \\
Pubfig83 & 24.00\% & 62.00\% & 2.00\% & 66.00\% \\
Facescrub & 19.00\% & 48.00\%  & 0.50\% & 35.68\% \\

\bottomrule

\end{tabular}
}
\caption{\label{tab:diff-datasets} Attack performance comparison various datasets.  }
\end{table}

% \textcolor{red}{It will be good to add some insight into why CelebA work better than publig 88, which is then better than Facescrub.}

%Figure~\ref{fig:groundt} illustrates the initial points and corresponding final outputs of \AlgName. Note that the initial points are ensured to be classified into the target class through repeated random sampling from the GAN's latent space until hitting the target class. Although the initial points are classified as target class, it can be seen that such randomly encountered points are not representative for the target classes. On the other hand, out strategy of moving away from the decision boundary consistently produces realistic, representative reconstruction for the target classes.

%\begin{figure}[t!]
%  \centering
%  \includegraphics[width=0.9999\linewidth]{
%  \caption{Illustration of the transformation from a random start point to the final output of \AlgName.}
%  \label{fig:groundt}
%\end{figure}

% \vspace{-2em}
\paragraph{Performance on Different Models.} We also evaluate our attack on multiple different models trained on the same dataset (CelebA). This experiment is intended to test whether our approach can generalize to different model architectures.
Table~\ref{tab:celebA-models} shows that \AlgName indeed continues to perform well on a variety of target model architectures. In particular, \AlgName outperforms GMI and the blackbox attack by a substantial margin for all model architectures. As we can see, the attack accuracy is $2x-4x$ that of GMI attack, while the blackbox attack continues to have $< 2\%$ accuracy.
Additionally, our performance on all model architectures is comparable to that of the state-of-the-art whitebox. Similar to other attacks, our attack becomes more successful when the target model has higher predictive power.

\begin{table}[ht]
\small
\centering
\scalebox{\tablescale}{
\begin{tabular}{ccccc}
\toprule
\multirow{2}{*}{Model Archt.} & \multicolumn{2}{c}{[Whitebox]} & [Blackbox] & [Label-only]\\
&  GMI & KED-MI & LB-MI & \textbf{\AlgName}\\
\midrule 
FaceNet64 & 32.00\% & 82.00\% & 1.67\% & 75.67\% \\
IR152  & 26.00\% & 83.00\% & 0.33\% & 72.00\% \\
VGG16  & 15.00\% & 69.00\%  & 1.33\% & 63.33\% \\
\bottomrule

\end{tabular}
}
\caption{\label{tab:celebA-models} Attack performance comparison on different model architectures trained on the CelebA dataset.  }
\end{table}

\paragraph{Cross-Dataset Evaluation.}

\vspace{-2em}
\begin{table}[ht]
\small
\centering
\scalebox{0.85}{
\begin{tabular}{lcccc}
\toprule
\multirow{2}{*}{Public$\rightarrow$Private} & \multicolumn{2}{c}{[Whitebox]} & [Blackbox] & [Label-only]\\
&  GMI & KED-MI & LB-MI & \textbf{\AlgName}\\
\midrule 
FFHQ$\rightarrow$CelebA & 9.00\% & 48.33\% & 0.67\% & 46.00\% \\
FFHQ$\rightarrow$Pubfig83 & 28.00\% & 88.00\% & 4.00\% & 80.00\% \\
FFHQ$\rightarrow$Facescrub & 12.00\% & 60.00\%  & .015\% & 39.20\% \\
\bottomrule

\end{tabular}
}
\caption{\label{tab:cross-datasets} Performance comparison when there is a large distribution shift between public and private data.  }
\end{table}

In prior experiments, we assumed that the attacker had access to public data with low distributional shift with the private data.
This is because both public and private domains are derived from the same dataset. It is important to consider a more pragmatic scenario, in which the attacker has access to only public data that have a larger distributional shift. To investigate this scenario, we perform an experiment in which we use the FFHQ dataset as public data.

As shown in Table \ref{tab:cross-datasets}, the accuracy indeed decreases significantly for the CelebA dataset when we utilize FFHQ as our public dataset. Interestingly, the attack accuracy for Pubfig83 and Facescrub datasets has increased. The rationale for this performance boost is that Pubfig83 and Facescrub datasets have just 33 and 330 identities in their public distributions, respectively, as shown in Table~\ref{tab:datasets-details}. This means that the GAN models trained on these datasets would lack the ability to generalize and thus, produce bad results. Therefore, the ability of GAN models to generalize to the large number of identities in FFHQ compensate for the distributional shift and consequently, the results improve. On the other hand, the CelebA dataset has a rather significant number of public identities (9177 identities). Thus, the GAN is already capable of generalizing across different identities, and the performance increase associated with generalizing on a more varied dataset is insufficient to compensate for the performance reduction associated with distributional shift. The takeaway from this experiment is that having a large, diverse public data for distilling a distributional prior is crucial to MI attack performance.

\paragraph{Limited Query Budget.}
We investigate the performance of our attack at various query budgets. In practice, some online models, such as Google's cloud vision API, limit the number of queries per minute, others may ban users if they identify an unusually high volume of queries.  Due to the fact that some attack scenarios restrict the amount of queries that may be sent to the target model, it is important to investigate the impact of this restriction on the attack performance. 

This restriction has not been addressed in prior works that conduct whitebox MI attacks in the literature. This is because the attacker, by definition, has complete access to the model parameters and can thus create an offline copy of the model, and then proceed with the attack offline with unlimited queries. However, for blackbox attacks in general (including label-only attack), the user cannot copy model parameters to an offline model. As a result, the query budget may become a constraint.

% We study the effect of limited query budget on our attack.
Fig.\ref{fig:budget-graphs} (a) demonstrates how \AlgName performs under different query budgets. We see that the attack accuracy increases exponentially by increasing the query budget. This is true until we hit some query budget, then attack accuracy starts decreasing again. We will provide some insights on it in later in the paper. 
For all datasets examined in this paper, recovering a representative image to a private class requires from $10k$ to $16k$ queries to the model, which is very reasonable. 
%An interesting future extension, would be to find an attack algorithm that can perform the attack with much less query requirements 

The attacker should also be concerned when choosing the hyperparameter $N$ under limited query budgets.
%as choosing N would have a great effect on the number of queries per iteration and hence, the total number of possible iterations. 
Choosing large $N$ would increase the number of sampled points on sphere, and produce a better estimator for our update direction. On the other hand, for a fixed query budget, increasing $N$ means decreasing the number of possible iterations in the attack. 
We conducted experiments to show this trade-off between spending queries to get better gradient estimator per iteration vs using queries to apply more iterations. Fig. 3 (b), (c), and (d) indicate that, for small query budgets, \AlgName performs slightly better when spending query budget on increasing the number of iterations, instead of increasing $N$. However, for sufficiently large query budget, increasing $N$ produces better results.

%fig shows attack accuracy under different query budgets.

%We show in appendix the trade-off between the sampling more points per iteration VS performing more iterations under budget constraint. 
%For unlimited query budgets, table \ref{tab:query-samlpes-per-sphere} shows empirically that increasing the sphere samples from 32 to 128 have performance gain. However, further increasing N decreases performance. The reasons is unclear. This is because at high values of N, having all the samples lying  inside the class boundary becomes harder. Thus, its harder for \AlgName to progress to the next radius.
%Another empirical question would be to ask, is increasing the number of sampled points be always good? We answer this question in analyzing the attack section.

\begin{figure}[ht]
  \centering
  \includegraphics [width=0.9999\linewidth]{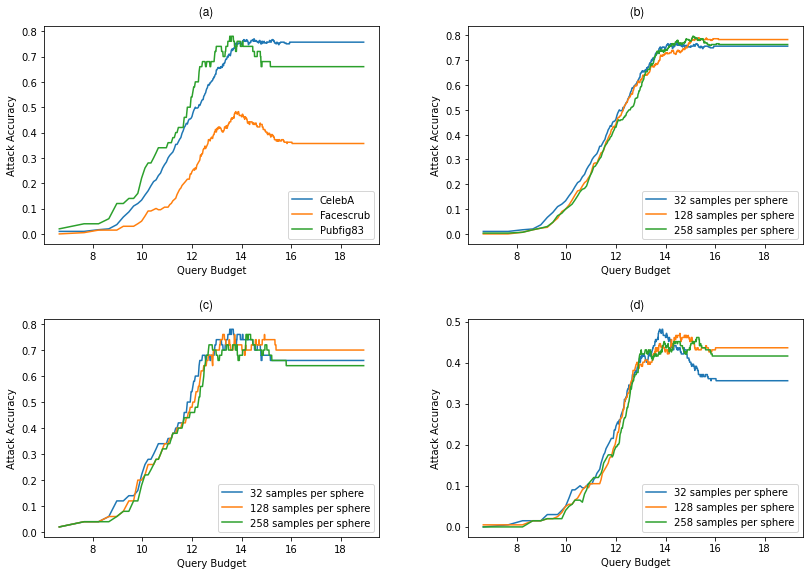}
  \caption{Attack accuracy of \AlgName under different query budgets. (a) compares different datasets. (b), (c), and (d) compare different sampling strategies for CelebA, pubfig, and Facescrub respectively. Query Budget is shown in a base-2 log for convenience. }
  \label{fig:budget-graphs}
\end{figure}

%\begin{table}[ht]
%\small
%\centering
%\scalebox{\tablescale}{
%\begin{tabular}{ccccc}
% \Xhline{1pt}
%\toprule
%\multirow{2}{*}{Dataset}& \multirow{2}{*}{Target Model}& %\multicolumn{3}{c}{\# Sphere Samples}\\ 
% \Xhline{1pt}
% & & 32 & 128 & 256\\ 
%\midrule
%             & FaceNet64 & 75.67\% & \textbf{78.33\%} & 76.33\%  \\
%CelebA       & IR152     & 72.00\% & \textbf{74.67\%} & 74.33\%  \\
%             & VGG16     & \textbf{63.33}\% & 57.00\% & 59.00\%  \\
%Facescrub200 & FaceNet64 & 35.68\% & \textbf{43.72\%} & 41.71\%  \\
%Pubfig83     & FaceNet64 & 66.00\% &  \textbf{70.00\% }     & 64.00\% %\\
%
%\bottomrule
%
%\end{tabular}
%}
%\caption{\label{tab:query-samlpes-per-sphere} Attack accuracy for various models and dataset at multiple different values sampled on the sphere.}
%\end{table}

\vspace{-1em}

\paragraph{Analyzing \AlgName.} A qualitative analysis for our \AlgName can be seen in Fig. \ref{fig:radius}. It is noticeable that the first generated image at the beginning of the attack is not a good representative for the target class. The progression of the image towards the groundtruth images is clearly seen with the increase of $R$ .

\begin{figure}[ht]
  \centering
  \includegraphics [scale =0.6]{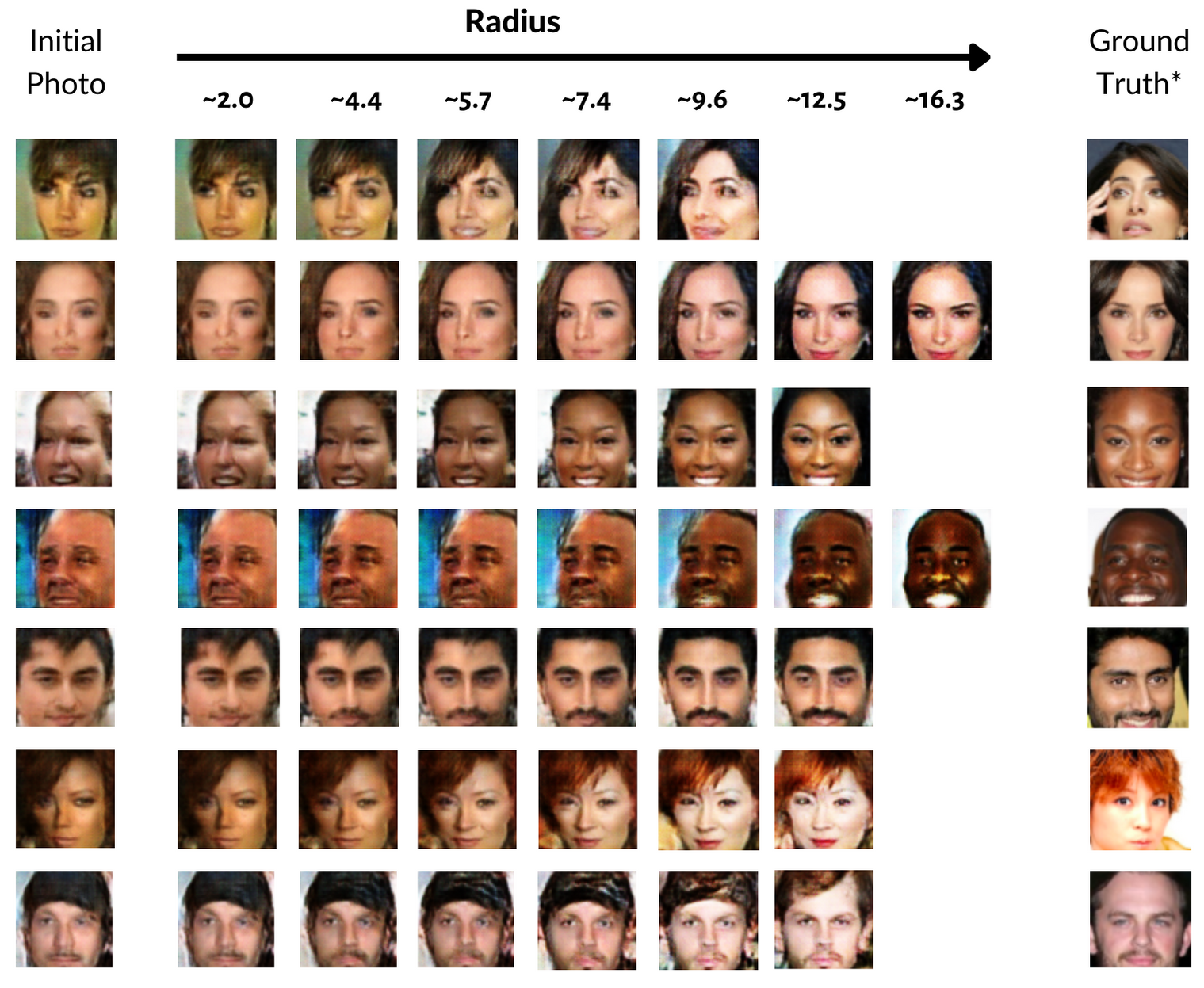}
  \caption{\AlgName's progression along each radius from the first random initial point until the algorithm's termination. }
  \label{fig:radius}
\end{figure}

% As Fig.\ref{fig:radius} provided a qualitative evaluation on the intermediate steps of our algorithm and how it progresses, we also

Below, we provide some quantitative analysis. Table \ref{tab:analaysis-1} analyzes the intermediate steps when attacking FaceNet64 model on CelebA dataset. We say the attack reached a radius $R$ when it finds a  center point, for which all points sampled on a sphere with radius $R$ lie in the correct target class.
We report for each reached radius during the attack the following measurements: (i) the percentage of the target identities that successfully reach it (column: labels \% ); (ii) the minimum, maximum, average number of iterations required to reach it; and (iii) the attack success accuracy of the points that reached it (column: success \%).

As we can see in the "labels \%" column, \AlgName is able to increase $R$ multiple times for all target identities. In fact, all target identities had their $R$ increased by \AlgName at least 5 times. This shows the effectiveness of our algorithm to repel away from the boundary and get closer to the center of the class (which is our goal). 

Another interesting observation is that the bigger the radius is, the higher the attack accuracy we get. 
This is true until reaching a certain radius size then the accuracy starts dropping. As suggested by our theoretical analysis, at high radii, the gradient estimator becomes erroneous; hence, following the direction of gradient estimator will decrease the attack accuracy at those radii. This observation is consistent for all our conducted experiments regardless the model or dataset.
Unfortunately, since the attacker does not have any ground truth images of the target class (or an evaluator classifier), it is not possible to decide what is the best radius that the algorithm should stop at. Nevertheless, as seen in the table \ref{tab:analaysis-1}, the number of identities that reached those radii is very low and their contribution to the final attack accuracy is small. Additionally, our stopping criterion for the algorithm empirically provides close results to the best radius. For this experiment particularly, we were able to get 75.67\% which is close to stopping at the best radius. 
%Similar pattern was found for all our conducted experiments regardless the model or dataset.

%as shown in figure, the more budget we invest, it is a usual behavior for all graphs, that at some point, the attack accuracy will decrease after increasing the budget

%Figure~\ref{fig:init}
%We provide further insights on the intermediate steps of our algorithm.

%comment draft: table \ref{tab:analaysis-1} shows the analysis of the intermediate steps of our alg for facenet64 model on celebA dataset. comment on overfitting, higher likelihood (perturbation magnitude, ie dist from boundary) can cause overfitting, success rate always increase till some point then always dec, which means its overfitting to target model. Comment on labels col and average iters col.

%in appendix, explain in details why 128 is better than 256, the reason should be clear after we analyze internal steps. We should see that higher samples per sphere will lead to higher percentage of points being at bigger radius in the target model, which causes an overfit. 

\begin{table}[h]
\centering
\scalebox{0.85}{

\small{
\begin{tabular}{lccccc}
\toprule
Radius &  labels \%  & min iters & max iters & avg iters &  success \%\\

\midrule 
 2.00 &  100.00\% &  0 &  191 &  37.29 &  23.00\% \\ 
 2.60 &  100.00\% &  0 &  246 &  63.20 &  30.33\% \\ 
 3.38 &  100.00\% &  0 &  374 &  103.43 &  45.00\% \\ 
 4.39 &  100.00\% &  2 &  627 &  156.65 &  56.00\% \\ 
 5.71 &  100.00\% &  28 &  947 &  230.64 &  63.67\% \\ 
 7.43 &  100.00\% &  53 &  1721 &  336.38 &  71.67\% \\ 
 9.65 &  97.00\% &  89 &  1899 &  502.60 &  77.66\% \\ 
 12.55 &  71.00\% &  141 &  1909 &  746.60 &  80.28\% \\ 
 16.31 &  20.33\% &  298 &  1823 &  939.90 &  70.49\% \\ 
 21.21 &  1.67\% &  492 &  1875 &  1122.00 &  60.00\% \\ 
 27.57 &  0.67\% &  660 &  728 &  694.00 &  50.00\% \\ 
 35.84 &  0.33\% &  877 &  877 &  877.00 &  0.00\% \\ 
 \bottomrule
 
\end{tabular}
}}
\caption{\label{tab:analaysis-1} Analysis on the intermediate steps of our algorithm.}
\end{table}

\begin{figure}[t!]
  \centering
  \includegraphics [width=.85\linewidth]{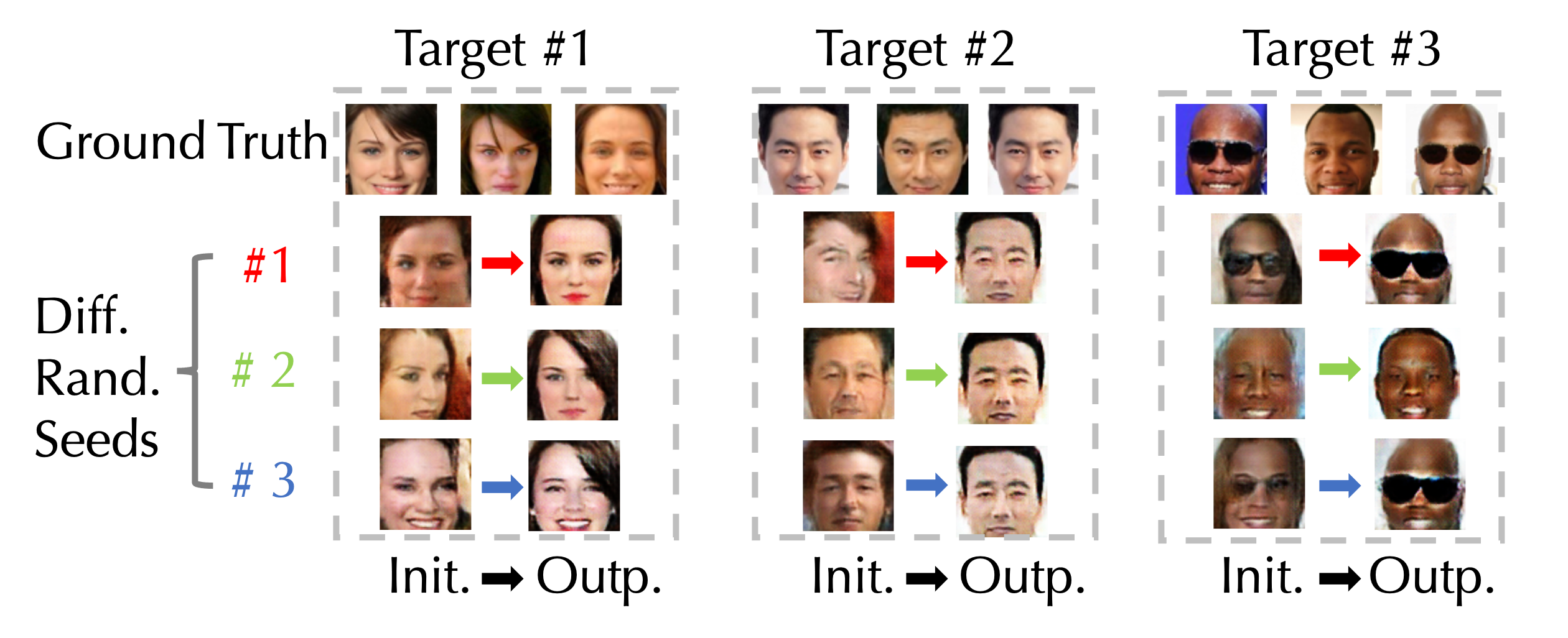}
  \caption{Ablation study of initialization of \AlgName. ``Init" is the initial sampled point that we start our attack from. ``Outp" is the final output of the attack. }
  \label{fig:init}
\end{figure}

\paragraph{Effect of Random Sampling}
Due to the fact that \AlgName starts from an initial random point, we conducted an experiment to show whether different initial points would affect the algorithm outcome. 
We started three attacks with different random seeds on a FaceNet64 model trained on the CelebA dataset. The accuracy is 75.67\%,  76.33\%, and 75.67\% respectively. This shows that the random initial point has little effect quantitatively on our algorithm. Fig.\ref{fig:init} demonstrates our quantitative results, where we can observe that even under different initial points, the output of the algorithm is close to the ground truth images.

\section{Conclusion}
We presented a novel algorithm to perform the first label-only MI attack. Experiments have showed the effectiveness of our approach on different datasets and model architectures. Interestingly, the approach provides comparable results with the state-of-the-art whitebox attacks and outperforms all the other baselines despite the fact that they make stronger assumption about the attacker knowledge. 
As future work, the closeness of the results in multiple experiments between the label-only attack and the state-of-the-art whitebox attack indicates that there may still be room for improvement for whitebox attack. Similarly, the blackbox baseline attack underperforms our label-only attack by a huge margin although it can access more fine-grained model output than our label-only attack. Theoretically, it should be an upper bound for our performance. 
%An investigation in blackbox attack would also be interesting.
%Bibliography

\clearpage
%%%%%%%%% REFERENCES
{\small
\bibliographystyle{ieee_fullname}
\bibliography{paper}
}

%%%%%%%%%%%%%%%%% APPENDIX
\appendix
\paragraph{\textbf{Appendix}}
%%%%%%%%% ABSTRACT

\section{Bounding Estimated Gradient}

The following proof technique was inspired by \cite{chen2020hopskipjumpattack}. However, in our case the major difference is that our initial point $z$ is an arbitrary point within the target class instead of being near the decision boundary, the fact which the authors \cite{chen2020hopskipjumpattack} used to assume $M_{c^*}\left(z\right) = 0$.
Furthermore, our goal is to move towards the centroid of a target class not to reach the boundary, which is completely opposite direction than ours. 
In summary, by exploiting the Taylor's theorem and the Minkowski inequality for expectations, we bounded the cosine angle between our gradient estimator and the true gradient, which we present for two different attack settings.

\subsection{Linear Case}
\label{sec:linearcase}

\begin{theorem}
\label{thm:linear}
Assume $f$ has a linear classification model. Let $z$ be an arbitrary point within the target class, i.e. $M_{c^*}(z)>0$. Then, the cosine of the angle between $\mathbb{E}[\widehat{M_{c^*}}(z, R)]$ and $\nabla M_{c^*}\left(z\right)$ is bounded by 
\begin{multline}
\cos \angle\left(\mathbb{E}\left[\widehat{\nabla M_{c^*}}\left(z, R\right)\right], \nabla M_{c^*}\left(z\right)\right) \\
\geq 1- \mathcal{O}\bigg(\frac{M_{c^*}\left(z\right)^2  (d-1)^2}{\delta^2 R^2 \left\| \nabla M_{c^*}\left(z\right) \right\|_{2}^2}\bigg).
\end{multline}
Therefore, with increasing radius $R$,
\begin{equation}
\lim _{R \rightarrow \infty} \cos \angle\left(\mathbb{E}\left[\widehat{M_{c^*}}\left(z, R \right)\right], \nabla M_{c^{\star}}\left(z\right)\right)=1,
\end{equation}
which tells that the estimator is asymptotically unbiased for gradient estimation.
\end{theorem}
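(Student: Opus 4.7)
The plan is to compute the expected estimator as an integral over the unit sphere, reduce it via the linearity of $M_{c^*}$ to an average over a spherical cap, and then use rotational symmetry (together with Minkowski's inequality, as the authors indicate) to control the angle between the expected estimator and $\nabla M_{c^*}(z)$.

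First I would rewrite $\Phi_{c^*}(z + Ru) = -\mathbf{1}\{M_{c^*}(z+Ru) \leq 0\}$, apply Taylor's theorem (which is exact for a linear classifier) to get $M_{c^*}(z+Ru) = M_{c^*}(z) + R \langle \nabla M_{c^*}(z), u\rangle$, and rewrite the event as the half-space $\{u : \langle u, \hat{g} \rangle \leq -\tau\}$, where $\hat{g} = \nabla M_{c^*}(z) / \|\nabla M_{c^*}(z)\|$ and $\tau = M_{c^*}(z)/(R\,\|\nabla M_{c^*}(z)\|) > 0$. The expected estimator is then $-\mathbb{E}_u[\mathbf{1}\{u \cdot \hat{g} \leq -\tau\}\, u]$, the negative mean of $u$ over a spherical cap on the far side of the target class.

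Next I would decompose $u = (u \cdot \hat{g})\hat{g} + u_\perp$. Rotational symmetry of the uniform measure on $S^{d-1}$ around the $\hat{g}$-axis forces $\mathbb{E}[\mathbf{1}\{u \cdot \hat{g} \leq -\tau\}\, u_\perp] = 0$, so the expected estimator lies along $\hat{g}$ with a positive scalar coefficient. To recover the stated quantitative bound rather than the trivial ``cosine equals one,'' I would follow the Minkowski route the authors advertise: upper bound the perpendicular component via $\|\mathbb{E}[\Phi_{c^*}(z+Ru)\,u_\perp]\| \leq \mathbb{E}[|\Phi_{c^*}(z+Ru)|\,\|u_\perp\|]$, and lower bound the parallel component using the spherical-cap mass together with the average of $|u \cdot \hat{g}|$ on the cap. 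The $(d-1)^2$ factor will trace back to the dimension dependence of $\|u_\perp\|$ on $S^{d-1}$; the $M_{c^*}(z)^2$ factor arises from the cap half-width $\tau \|\nabla M_{c^*}(z)\|$; and the $R^2 \|\nabla M_{c^*}(z)\|^2$ denominator comes from normalizing the parallel component. Assembling these through the identity $\cos\angle \geq 1 - \|\mu_\perp\|^2/(2\|\mu\|^2)$ produces the claimed inequality.

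The asymptotic statement then follows because $\tau \to 0$ as $R \to \infty$, so the upper bound on the angular error tends to zero. The main obstacle is bookkeeping: the parameter $\delta$ in the bound is not explicitly defined in the theorem as stated, so I would need to match it to a concrete geometric quantity inside the Minkowski step (plausibly the typical magnitude of $\|u_\perp\|$ on $S^{d-1}$ or a normalization of the sampling radius) and carefully track the leading-order dimension and radius dependence through the spherical integrals. Pure rotational symmetry already yields perfect alignment in the linear case, so the real content of the proof is keeping the slacker Minkowski bound \emph{intact}, precisely because it is this form that extends gracefully to the nonlinear Taylor-remainder analysis carried out in the appendix.
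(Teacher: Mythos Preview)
Your core observation is correct and in fact sharper than the paper: in the strictly linear case the event $\{\Phi_{c^*}(z+Ru)=-1\}$ is \emph{exactly} a spherical cap $\{u:\langle u,\hat g\rangle\le -\tau\}$, so rotational symmetry kills the $u_\perp$ contribution and the expected estimator is \emph{exactly} parallel to $\nabla M_{c^*}(z)$. The cosine is identically $1$, which trivially implies the stated $\mathcal{O}(\cdot)$ bound and the limit.

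The paper does something different. Following the HopSkipJumpAttack template, it does \emph{not} exploit that the event is a single cap. Instead it introduces a symmetric annulus $E_2=\{|\nabla M_{c^*}(z)^T u|<w\}$ with $w=M_{c^*}(z)/(\delta R)$ (the $\delta\in(0,1)$ you were puzzling over is an extra scaling of the sampling radius, so the queries are at $z+\delta Ru$), treats the value of $\Phi_{c^*}$ on $E_2$ as unknown, and compares $\mathbb{E}[\Phi_{c^*}(z+\delta Ru)u]$ to the reference vector $\mathbb{E}[|\beta_1|]\,\hat g$. The discrepancy is bounded by $2Rp$ with $p=\mathbb{P}(E_2)$; then $p$ is controlled via the fact that $\langle \hat g,u\rangle^2\sim\mathcal{B}(\tfrac12,\tfrac{d-1}{2})$, and this, combined with $\mathbb{E}[|\beta_1|]\asymp R/\sqrt{d}$, is where the $(d-1)^2$ factor actually comes from---not from $\|u_\perp\|$ as you guessed.

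So: your symmetry argument proves the theorem cleanly, and your instinct that the slack bound is only there to set up the nonlinear case is right. But your sketch of how to recover the explicit $\mathcal{O}(\cdot)$ form via ``Minkowski on $u_\perp$'' would just return $0$ in the linear case and would not reproduce the paper's constants. If you want to match the paper's bound verbatim, you need the annulus--Beta argument; if you only want to prove the theorem as stated, your route is shorter.
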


\begin{proof}
Let $Ru$ be a random vector uniformly distributed on the sphere, where $R > 0$ is a radius of the sphere. By Taylor's theorem, for any $\delta \in(0,1)$, we have that
\begin{equation}
M_{c^*}\left(z+\delta Ru\right)=M_{c^*}\left(z\right) + \delta \nabla M_{c^*}\left(z\right)^{T} Ru.
\end{equation}
Recall that 
$$M_{c^*}(z) > 0$$
and let $w:=\frac{M_{c^*}(z)}{\delta R} .$ \\
For the case $\nabla M_{c^*}\left(z\right)^{T} u>w$, using Taylor series expansion and the fact that
%$$
%\begin{aligned}
%M_{c^*}\left(z+\delta Ru\right) & = M_{c^*}\left(z\right) + \delta \nabla M_{c^*}\left(z\right)^{T} Ru. \\
%\end{aligned}
%$$

$$ \delta\nabla M_{c^*}(z)^{T} Ru > w \delta R = M_{c^*}(z) > 0, $$
we derive that $M_{c^*}\left(z+\delta Ru\right) > M_{c^*}(z)$. \\
Similarly, for the case $\nabla M_{c^*}\left(x_{t}\right)^{T} u <- w$, using Taylor expansion and the fact that
%$$
%\begin{aligned}
%M_{c^*}\left(z+\delta Ru\right) & = M_{c^*}(z) + \delta \nabla M_{c^*}\left(z\right)^{T} Ru. \\
%\end{aligned}
%$$
$$ \delta \nabla M_{c^*}\left(z\right)^{T} Ru < -w\delta R = - M_{c^*}(z) < 0,$$
 we have that $M_{c^*}\left(z+\delta Ru\right) < M_{c^*}(z)$. \\
 Therefore, from these two cases, we arrive at
\begin{align}\label{eq:linearphi}
\phi_{c^*}\left(z+\delta Ru\right)=\left\{\begin{array}{l}
0 \text { if } \nabla M_{c^*}\left(z\right)^{T} Ru>w \\
-1 \text { if } \nabla M_{c^*}\left(z\right)^{T} Ru<-w.
\end{array}\right.
\end{align}
We define  $v_{1}=\nabla S\left(z\right) /\left\|\nabla M_{c^*}\left(z\right)\right\|_{2}, v_{2}, \ldots, v_{d} $ by expanding the vector $\nabla M_{c^*}\left(z\right)$ to orthogonal bases in $\mathbb{R}^{d}$. Then, we can write a random vector $Ru=\sum_{i=1}^{d} \beta_{i} v_{i}$, where $\beta$ is uniformly distributed on the sphere of radius $R$. We construct an upper cap $E_{1}:=$ $\left\{\nabla M_{c^*}\left(z\right)^{T} Ru>w\right\}$, the annulus $E_{2}:=\left\{\left|\nabla M_{c^*}\left(z\right)^{T} Ru\right|<w. \right \}$, and the lower cap $E_{3}:=\left\{\nabla M_{c^*}\left(z\right)^{T} Ru<-w\right\} .$ Let $p:=\p\left(E_{2}\right)$ be the probability of event $E_{2}$, then
$\p\left(E_{1}\right)=\p\left(E_{3}\right)=(1-p) / 2 .$ 
% DO WE NEED TO WRITE MORE ABOUT THIS PART?
For any $i \neq 1$ by symmetry:
\begin{align}
\e\left[\beta_{i} \mid E_{1}\right]=\e\left[\beta_{i} \mid E_{3}\right]=0.
\end{align}
Then, the expected value of the estimator becomes

\begin{align}
\begin{split}\label{eq:expect}
\mathbb{E}\left[\phi_{c^*}\left(z  + \delta Ru\right) u\right]= p \cdot\left(\mathbb{E}\left[\phi_{c^*}\left(z+\delta Ru\right) u \mid E_{2}\right]\right) \\
-\frac{1}{2} p \cdot \left(\mathbb{E}\left[\beta_{1} v_{1} \mid E_{1}\right] + \mathbb{E}\left[-\beta_{1} v_{1} \mid E_{3}\right]\right) \\
+\frac{1}{2}\mathbb{E}\left[\beta_{1} v_{1} \mid E_{1}\right]+\frac{1}{2}\mathbb{E}\left[-\beta_{1}  v_{1} \mid E_{3}\right].
\end{split} 
\end{align}
Now, we can bound the difference between $\mathbb{E}\left[\left|\beta_{1}\right| v_{1}\right]=\frac{\mathbb{E}\left[\left|\beta_{1}\right|\right]}{\left\|\nabla M_{c^*}\left(z\right)\right\|_{2}} \nabla M_{c^*}\left(z\right)$ and $\mathbb{E}\left[\phi_{c^*}\left(z+\delta Ru\right) u\right]$ using  Eq. \ref{eq:expect}:

\begin{align}
\begin{split}\label{eq:expbound}
&\left\|\mathbb{E}\left[\phi_{c^*}\left(z+\delta Ru\right) u\right]  -  \mathbb{E}\left[\left|\beta_{1}\right| v_{1}\right]\right\|_{2} \\
& \leq \left\| p \cdot\left(\mathbb{E}\left[\phi_{c^*}\left(z+\delta Ru\right) u \mid E_{2}\right]\right) \right.\\
&- \left. \frac{1}{2} p \cdot \left(\mathbb{E}\left[\beta_{1} v_{1} \mid E_{1}\right] + \mathbb{E}\left[-\beta_{1} v_{1} \mid E_{3}\right]\right) \right\|_{2} \\
&\leq p \cdot (R + \frac{1}{2}R + \frac{1}{2}R) = 2Rp.
\end{split}
\end{align}
In Ineq. \ref{eq:expbound}, we first substitute the LHS with  $\mathbb{E}\left[\left|\beta_{1}\right| v_{1}\right]=\frac{\mathbb{E}\left[\left|\beta_{1}\right|\right]}{\left\|\nabla M_{c^*}\left(z\right)\right\|_{2}} \nabla M_{c^*}\left(z\right)$, then square both sides, and lastly divide by  $\mathbb{E}\left[\left|\beta_{1}\right|\right]^2$ to derive the following

\begin{align}
\left\|\frac{\mathbb{E}\left[\phi_{c^*}\left(z+\delta Ru\right) u\right]}{\mathbb{E}\left[\left|\beta_{1}\right|\right]}-\frac{\nabla M_{c^*}\left(z\right)}{\left\|\nabla M_{c^*}\left(z\right)\right\|_{2}}\right\|_{2}^2 \leq \left(\frac{2Rp}{\mathbb{E}\left[\left|\beta_1 \right| \right]}\right)^2.
\end{align}

Using the property of the angle between two vectors, we obtain the cosine inequality:
$$
\cos \angle\left(\mathbb{E}\left[\phi_{c^*}\left(z+\delta Ru\right) u\right], \nabla M_{c^*}\left(z\right)\right) \geq 1-\frac{1}{2}\left(\frac{2 Rp}{\mathbb{E}\left[\left|\beta_{1}\right|\right]}\right)^{2}.
$$
\begin{equation} \label{eq:cos}
\end{equation}
The probability $p$ can be bounded using the fact that $\left\langle\frac{\nabla M_{c^*}\left(z\right)}{\left\|\nabla M_{c^*}\left(z\right)\right\|_{2}}, u\right\rangle^{2}$ is a Beta distribution $\mathcal{B}\left(\frac{1}{2}, \frac{d-1}{2}\right)$ :

\begin{align}
\begin{split}\label{eq:pbound}
p &=\mathbb{P}\left(\left\langle\frac{\nabla M_{c^*}\left(z\right)}{\left\|\nabla M_{c^*}\left(z\right)\right\|_{2}}, u\right\rangle^{2} \leq \frac{w^{2}}{\left\|\nabla M_{c^*}\left(z\right)\right\|_{2}^{2}}\right) \\
& \leq \frac{2 w}{\mathcal{B}\left(\frac{1}{2}, \frac{d-1}{2}\right)\left\|\nabla M_{c^*}\left(z\right)\right\|_{2}}.
\end{split}
\end{align}
Substituting the Bound \ref{eq:pbound} into Eq. \ref{eq:cos}, we receive a bound for the cosine

\begin{align}
\begin{split}\label{eq:cosbound}
\cos \angle\left(\mathbb{E}\left[\phi_{c^*}\left(z+\delta Ru\right) u\right], \nabla M_{c^*}\left(z\right)\right) \\
\geq 1-\frac{8 R^2 w^{2}}{\left(\mathbb{E}\left[\left|\beta_{1}\right|\right] \right)^{2} \mathcal{B}\left(\frac{1}{2}, \frac{d-1}{2}\right)^{2}\left\|\nabla M_{c^*}\left(z\right)\right\|_{2}^{2}} \\
=1-\frac{2M_{c^*}\left(z\right)^2 R^2 (d-1)^2}{R^2 \delta^2 R^2 \left\| \nabla M_{c^*}\left(z\right) \right\|_{2}^2} \\
=1-\frac{2M_{c^*}\left(z\right)^2  (d-1)^2}{\delta^2 R^2 \left\| \nabla M_{c^*}\left(z\right) \right\|_{2}^2}
\end{split}
\end{align}
In addition, we have that
\begin{equation}
\mathbb{E} \left[ \widehat{\nabla M_{c^*}}\left(z, R\right) \right]=\mathbb{E}\left[\phi_{c^*}\left(z+\delta Ru\right) u\right],
\end{equation}
so plugging it to Ineq. \ref{eq:cosbound}, we have the final bound to be

\begin{align}
\begin{split}\label{eq:cosfin}
\cos \angle\left(\mathbb{E}\left[\widehat{\nabla M_{c^*}}\left(z, R\right)\right], \nabla M_{c^*}\left(z\right)\right) \\
\geq 1-\frac{9M_{c^*}\left(z\right)^2  (d-1)^2}{2\delta^2 R^4 \left\| \nabla M_{c^*}\left(z\right) \right\|_{2}^2}.
\end{split}
\end{align}
Therefore, we can observe that with increasing radius $R$, we will achieve
\begin{align}
\lim _{R \rightarrow \infty} \cos \angle\left(\mathbb{E}\left[\widehat{M_{c^*}}\left(z, R \right)\right], \nabla M_{c^{\star}}\left(z\right)\right)=1,
\end{align}
\end{proof}

\subsection{Nonlinear Case}
\label{sec:nonlinearcase}
\begin{theorem}
\label{thm:nonlinear}
Let $z$ be an arbitrary point within the target class, i.e. $M_{c^*}(z)>0$. Assume $f$ be a nonlinear classification model with a Lipschitz continuous gradient in a neighborhood of $z$. Then, the cosine of the angle between $\mathbb{E}[\widehat{M_{c^*}}(z, R)]$ and $\nabla M_{c^*}\left(z\right)$ is bounded by 
\begin{multline}
\cos \angle\left(\mathbb{E}\left[\widehat{\nabla M_{c^*}}\left(z, R\right)\right], \nabla M_{c^*}\left(z\right)\right) \\
\geq 1- \frac{4M_{c^*}\left(z\right)^2 + L^2\delta^4R^4 + 4M_{c^*}\left(z\right)^2 L \delta^2 R^2  }{2\delta^2  R^2 (d-1)^{-2}\left\|\nabla M_{c^*}\left(z\right)\right\|_{2}^{2}}.
\end{multline}

Therefore, with increasing radius $R$ up to the point of inflection $K$,
\begin{equation}
\lim _{R \rightarrow K} \cos \angle\left(\mathbb{E}\left[\widehat{M_{c^*}}\left(z, R \right)\right], \nabla M_{c^{\star}}\left(z\right)\right)=1,
\end{equation}
which tells that the alignment between the proposed gradient estimator and the true gradient $\nabla M_{c^*}\left(z\right)$ is increased with greater radius $R$ until reaching a certain point K. After passing that point of inflection, the alignment between them branches off.
\end{theorem}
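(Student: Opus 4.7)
The plan is to follow the linear-case proof in Section~\ref{sec:linearcase} step by step, absorbing the second-order error produced by nonlinearity into the threshold that separates the upper cap, the lower cap, and the middle annulus. The only place where linearity was genuinely used in the linear proof was the exact Taylor identity $M_{c^*}(z+\delta Ru) = M_{c^*}(z) + \delta \nabla M_{c^*}(z)^{T} Ru$. When $\nabla M_{c^*}$ is only $L$-Lipschitz in a neighborhood of $z$, this becomes
$$
M_{c^*}(z+\delta Ru) \,=\, M_{c^*}(z) + \delta \nabla M_{c^*}(z)^{T} Ru + r(u), \qquad |r(u)| \le \tfrac{L}{2}\|\delta Ru\|_2^2 \,=\, \tfrac{L\delta^2 R^2}{2},
$$
so the rest of the argument will go through as long as the threshold $w$ is enlarged enough to dominate $r(u)$.

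Concretely, I would set
$$
w' \,:=\, \frac{M_{c^*}(z)}{\delta R} \,+\, \frac{L\delta R}{2}.
$$
A direct substitution then shows that $\nabla M_{c^*}(z)^{T} u > w'$ forces $M_{c^*}(z+\delta Ru) > M_{c^*}(z) > 0$ and hence $\Phi_{c^*}(z+\delta Ru) = 0$, while symmetrically $\nabla M_{c^*}(z)^{T} u < -w'$ forces $\Phi_{c^*}(z+\delta Ru) = -1$. The three-region description of $\Phi_{c^*}$ used in the linear proof is therefore preserved, with the annulus $E_2 = \{\,|\nabla M_{c^*}(z)^{T} u| < w'\,\}$ replacing the narrower linear-case annulus.

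From here I would replay the derivation essentially verbatim: decompose $Ru$ in the orthonormal basis $v_1, \dots, v_d$ with $v_1 = \nabla M_{c^*}(z)/\|\nabla M_{c^*}(z)\|_2$; exploit the symmetry of the two caps under $u \mapsto -u$ to cancel the $\beta_i$-coordinates for $i \ne 1$ and the cap contributions to $v_1$; and bound the remaining annulus contribution in norm by $2Rp$, where $p := \mathbb{P}(E_2)$. The probability $p$ is controlled exactly as in Section~\ref{sec:linearcase} by the Beta distribution, giving $p \le 2w' / (\mathcal{B}(1/2, (d-1)/2)\, \|\nabla M_{c^*}(z)\|_2)$. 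Dividing by $(\mathbb{E}[|\beta_1|])^2$ and using the cosine reduction yields
$$
\cos\angle\!\left(\mathbb{E}[\widehat{\nabla M_{c^*}}(z,R)],\, \nabla M_{c^*}(z)\right) \,\ge\, 1 \,-\, \mathcal{O}\!\left(\frac{(w')^2\, (d-1)^2}{\|\nabla M_{c^*}(z)\|_2^2}\right),
$$
and expanding $(w')^2 = M_{c^*}(z)^2/(\delta R)^2 + M_{c^*}(z)\, L + L^2 \delta^2 R^2/4$ reproduces the three terms $M_{c^*}(z)^2$, $M_{c^*}(z)\, L \delta^2 R^2$, and $L^2 \delta^4 R^4$ in the numerator of the claimed bound.

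The step I expect to be the main obstacle is the asymptotic claim about the point of inflection $K$. Unlike the linear bound, this one is not monotone in $R$: the $M_{c^*}(z)^2/(\delta R)^2$ term shrinks with $R$, but the $L^2 \delta^2 R^2$ term grows, so the bound improves only up to some critical radius. To make the limit statement precise I would differentiate the error bound with respect to $R$, obtain a critical value of the form $K \sim \sqrt{2 M_{c^*}(z)/(L \delta^2)}$ where the two contributions balance, and verify that the bound is maximized at $R = K$ and strictly deteriorates for $R > K$. A subtlety worth flagging is that the Lipschitz hypothesis holds only in a neighborhood of $z$, so one must additionally check that $K$ lies inside that neighborhood in order for the remainder estimate $|r(u)| \le L \delta^2 R^2/2$ to remain valid; otherwise the inflection claim has to be localized to the range of $R$ for which the Lipschitz bound is actually in force.
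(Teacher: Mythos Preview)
Your proposal is correct and follows essentially the same route as the paper's own proof: the paper also enlarges the threshold to $w = M_{c^*}(z)/(\delta R) + L\delta R/2$ to absorb the second-order Taylor remainder, keeps the three-region cap/annulus decomposition and the Beta-distribution bound on $p$ intact, and then expands $w^2$ to produce the three numerator terms. Your handling of the inflection point $K$ (differentiating the error bound, finding the balance $K \sim \sqrt{2M_{c^*}(z)/(L\delta^2)}$, and flagging the neighborhood constraint) is in fact more careful than the paper, which simply asserts the limit statement without computing $K$ or addressing the locality of the Lipschitz hypothesis.
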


\begin{proof}
Let $Ru$ be a random vector uniformly distributed on the sphere, where $R > 0$ is a radius of the sphere. By Taylor's theorem, for any $\delta \in(0,1)$, we have that
\begin{flalign} \begin{split} \label{eq:taylor2}
&M_{c^*}\left(z+\delta Ru\right) 
\\&=M_{c^*}\left(z\right) + \delta \nabla M_{c^*}\left(z\right)^{T} Ru + \frac{1}{2}\delta^2 \nabla^2 M_{c^*}\left(z\right)^{T} R^2u.
\end{split}
\end{flalign}
Since our function $f$ has a Lipschitz continuous gradient with Lipschitz constant $L >0$, the following inequality holds
\begin{align}
\begin{split}\label{eq:lipschitz2}
    \left\| \nabla^2 M_{c^*}\left(z\right)u \right\|_{2} & \leq L \\
    & \iff\\
    \left\| \frac{1}{2}\delta^2\nabla^2 M_{c^*}\left(z\right) R^2u \right\|_{2}  & \leq \frac{1}{2}L\delta^2R^2
\end{split}
\end{align}
Recall that 
$$M_{c^*}(z) > 0$$
and let $w:=\frac{M_{c^*}(z)}{\delta R} + \frac{1}{2}L\delta R.$ \\
For the case $\nabla M_{c^*}\left(z\right)^{T} u>w$, using Taylor series expansion and the fact that
%$$
%\begin{aligned}
%M_{c^*}\left(z+\delta Ru\right) & = M_{c^*}\left(z\right) + \delta \nabla M_{c^*}\left(z\right)^{T} Ru. \\
%\end{aligned}
%$$

\begin{align} 
\begin{split}\label{eq:gradgtw2}
&    \delta \nabla M_{c^*}\left(z\right)^{T} Ru + \frac{1}{2}\delta^2 \nabla^2 M_{c^*}\left(z\right)^{T} R^2u \\
&    \geq \delta \nabla M_{c^*}\left(z\right)^{T} Ru - \frac{1}{2}L\delta^2 R^2 \\
 &   > \delta \nabla M_{c^*}\left(z\right)^{T} Ru - \frac{1}{2}L\delta^2 R^2 - M_{c^*}\left(z\right) \\
&    = \delta R \left( \nabla M_{c^*}\left(z\right)^{T} u - w \right) > 0,
\end{split}
\end{align}
we derive that $M_{c^*}\left(z+\delta Ru\right) > M_{c^*}(z)$. \\
Similarly, for the case $\nabla M_{c^*}\left(x_{t}\right)^{T} u <- w$, using Taylor expansion and the fact that
%$$
%\begin{aligned}
%M_{c^*}\left(z+\delta Ru\right) & = M_{c^*}(z) + \delta \nabla M_{c^*}\left(z\right)^{T} Ru. \\
%\end{aligned}
%$$
\begin{align}
\begin{split}\label{eq:gradltw2}
    \delta \nabla M_{c^*}\left(z\right)^{T} Ru + \frac{1}{2}\delta^2 \nabla^2 M_{c^*}\left(z\right)^{T} R^2u \\
    \leq \delta \nabla M_{c^*}\left(z\right)^{T} Ru + \frac{1}{2}L\delta^2 R^2 \\
    < \delta \nabla M_{c^*}\left(z\right)^{T} Ru + \frac{1}{2}L\delta^2 R^2 + M_{c^*}\left(z\right) \\
    = \delta R \left( \nabla M_{c^*}\left(z\right)^{T} u + w \right) < 0,
\end{split}
\end{align}
 we obtain that $M_{c^*}\left(z+\delta Ru\right) < M_{c^*}(z)$. \\
 Therefore, from these two cases, we arrive at
\begin{align} \label{eq:phi2}
\phi_{c^*}\left(z+\delta Ru\right)=\left\{\begin{array}{l}
0 \text { if } \nabla M_{c^*}\left(z\right)^{T} Ru>w \\
-1 \text { if } \nabla M_{c^*}\left(z\right)^{T} Ru<-w.
\end{array}\right.
\end{align}
We define  $v_{1}=\nabla S\left(z\right) /\left\|\nabla M_{c^*}\left(z\right)\right\|_{2}, v_{2}, \ldots, v_{d} $ by expanding the vector $\nabla M_{c^*}\left(z\right)$ to orthogonal bases in $\mathbb{R}^{d}$. Then, we can write a random vector $Ru=\sum_{i=1}^{d} \beta_{i} v_{i}$, where $\beta$ is uniformly distributed on the sphere of radius $R$. We construct an upper cap $E_{1}:=$ $\left\{\nabla M_{c^*}\left(z\right)^{T} Ru>w\right\}$, the annulus $E_{2}:=\left\{\left|\nabla M_{c^*}\left(z\right)^{T} Ru\right|<w. \right \}$, and the lower cap $E_{3}:=\left\{\nabla M_{c^*}\left(z\right)^{T} Ru<-w\right\} .$ Let $p:=\p\left(E_{2}\right)$ be the probability of event $E_{2}$, then
$\p\left(E_{1}\right)=\p\left(E_{3}\right)=(1-p) / 2 .$ 
% DO WE NEED TO WRITE MORE ABOUT THIS PART?
For any $i \neq 1$ by symmetry:
\begin{align}
\e\left[\beta_{i} \mid E_{1}\right]=\e\left[\beta_{i} \mid E_{3}\right]=0.
\end{align}
Then, the expected value of the estimator becomes

\begin{align}
\begin{split}\label{eq:expect2}
\mathbb{E}\left[\phi_{c^*}\left(z+\delta Ru\right) u\right]= p \cdot\left(\mathbb{E}\left[\phi_{c^*}\left(z+\delta Ru\right) u \mid E_{2}\right]\right)\\
-\frac{1}{2} p \cdot \left(\mathbb{E}\left[\beta_{1} v_{1} \mid E_{1}\right] + \mathbb{E}\left[-\beta_{1} v_{1} \mid E_{3}\right]\right) \\
+\frac{1}{2}\mathbb{E}\left[\beta_{1} v_{1} \mid E_{1}\right]+\frac{1}{2}\mathbb{E}\left[-\beta_{1} v_{1} \mid E_{3}\right].
\end{split}
\end{align}
Now, we can bound the difference between $\mathbb{E}\left[\left|\beta_{1}\right| v_{1}\right]=\frac{\mathbb{E}\left[\left|\beta_{1}\right|\right]}{\left\|\nabla M_{c^*}\left(z\right)\right\|_{2}} \nabla M_{c^*}\left(z\right)$ and $\mathbb{E}\left[\phi_{c^*}\left(z+\delta Ru\right) u\right]$ using  Eq. \ref{eq:expect2}:

\begin{align}
\begin{split}\label{eq:expbound2}
 &\left\|  \mathbb{E}  \left[\phi_{c^*}\left(z  +\delta Ru\right) u\right]  -  \mathbb{E}\left[\left|\beta_{1}\right| v_{1}\right]\right\|_{2} \\
& \leq \left\| p \cdot\left(\mathbb{E}\left[\phi_{c^*}\left(z+\delta Ru\right) u \mid E_{2}\right]\right) \right.\\
& - \left. \frac{1}{2} p \cdot \left(\mathbb{E}\left[\beta_{1} v_{1} \mid E_{1}\right] + \mathbb{E}\left[-\beta_{1} v_{1} \mid E_{3}\right]\right) \right\|_{2} \\
& \leq p \cdot (R + \frac{1}{2}R + \frac{1}{2}R) = 2Rp.
\end{split}
\end{align}
In Inequality \ref{eq:expbound2}, we first substitute the LHS with  $\mathbb{E}\left[\left|\beta_{1}\right| v_{1}\right]=\frac{\mathbb{E}\left[\left|\beta_{1}\right|\right]}{\left\|\nabla M_{c^*}\left(z\right)\right\|_{2}} \nabla M_{c^*}\left(z\right)$, then square both sides, and lastly divide by  $\mathbb{E}\left[\left|\beta_{1}\right|\right]^2$ to derive the following

\begin{align}
\left\|\frac{\mathbb{E}\left[\phi_{c^*}\left(z+\delta Ru\right) u\right]}{\mathbb{E}\left[\left|\beta_{1}\right|\right]}-\frac{\nabla M_{c^*}\left(z\right)}{\left\|\nabla M_{c^*}\left(z\right)\right\|_{2}}\right\|_{2}^2 \leq \left(\frac{2Rp}{\mathbb{E}\left[\left|\beta_1 \right| \right]}\right)^2.
\end{align}

Using the property of the angle between two vectors, we obtain the cosine inequality:
\begin{align}
\begin{split}\label{eq:cos2}
\cos \angle\left(\mathbb{E}\left[\phi_{c^*}\left(z+\delta Ru\right) u\right], \nabla M_{c^*}\left(z\right)\right) \\
\geq 1-\frac{1}{2}\left(\frac{2 Rp}{\mathbb{E}\left[\left|\beta_{1}\right|\right]}\right)^{2}.
\end{split}
\end{align} 
The probability $p$ can be bounded using the fact that $\left\langle\frac{\nabla M_{c^*}\left(z\right)}{\left\|\nabla M_{c^*}\left(z\right)\right\|_{2}}, u\right\rangle^{2}$ is a Beta distribution $\mathcal{B}\left(\frac{1}{2}, \frac{d-1}{2}\right)$ :

\begin{align}
\begin{split}\label{eq:pbound2}
p &=\mathbb{P}\left(\left\langle\frac{\nabla M_{c^*}\left(z\right)}{\left\|\nabla M_{c^*}\left(z\right)\right\|_{2}}, u\right\rangle^{2} \leq \frac{w^{2}}{\left\|\nabla M_{c^*}\left(z\right)\right\|_{2}^{2}}\right) \\
& \leq \frac{2 w}{\mathcal{B}\left(\frac{1}{2}, \frac{d-1}{2}\right)\left\|\nabla M_{c^*}\left(z\right)\right\|_{2}}.
\end{split}
\end{align}
Substituting the Bound \ref{eq:pbound2} into equality \ref{eq:cos2}, we receive a bound for the cosine

\begin{align}
\begin{split}\label{eq:cosbound2}
&\cos \angle\left(\mathbb{E}\left[\phi_{c^*}\left(z+\delta Ru\right) u\right], \nabla M_{c^*}\left(z\right)\right) \\
&\geq 1-\frac{8 R^2 w^{2}}{\left(\mathbb{E}\left[\left|\beta_{1}\right|\right] \right)^{2} \mathcal{B}\left(\frac{1}{2}, \frac{d-1}{2}\right)^{2}\left\|\nabla M_{c^*}\left(z\right)\right\|_{2}^{2}} \\
&\geq 1-\frac{8M_{c^*}\left(z\right)^2 + 2L^2\delta^4R^4 + 8M_{c^*}\left(z\right)^2 L \delta^2 R^2 }{\delta^2  \left(\mathbb{E}\left[\left|\beta_{1}\right|\right] \right)^{2} \mathcal{B}\left(\frac{1}{2}, \frac{d-1}{2}\right)^{2}\left\|\nabla M_{c^*}\left(z\right)\right\|_{2}^{2}} \\
&\geq 1-\frac{4M_{c^*}\left(z\right)^2 + L^2\delta^4R^4 + 4M_{c^*}\left(z\right)^2 L \delta^2 R^2  }{2\delta^2  R^2 (d-1)^{-2}\left\|\nabla M_{c^*}\left(z\right)\right\|_{2}^{2}}.
\end{split}
\end{align}
In addition, we have that
\begin{equation}
\mathbb{E} \left[ \widehat{\nabla M_{c^*}}\left(z, R\right) \right]=\mathbb{E}\left[\phi_{c^*}\left(z+\delta Ru\right) u\right],
\end{equation}
so plugging it to inequality \ref{eq:cosbound2}, we have the final bound to be

\begin{align}
\begin{split}\label{eq:cosfin2}
&\cos \angle\left(\mathbb{E}\left[\widehat{\nabla M_{c^*}}\left(z, R\right)\right], \nabla M_{c^*}\left(z\right)\right) \\
&\geq 1-\frac{4M_{c^*}\left(z\right)^2 + L^2\delta^4R^4 + 4M_{c^*}\left(z\right)^2 L \delta^2 R^2  }{2\delta^2  R^2 (d-1)^{-2}\left\|\nabla M_{c^*}\left(z\right)\right\|_{2}^{2}}.
\end{split}
\end{align}
Therefore, we can observe that with increasing radius $R$, we will achieve
\begin{align}
\lim _{R \rightarrow K} \cos \angle\left(\mathbb{E}\left[\widehat{M_{c^*}}\left(z, R \right)\right], \nabla M_{c^{\star}}\left(z\right)\right)=1,
\end{align}
\end{proof}

\section{Experiments}
\subsection{Attacking MNIST}
%talk about doing other exp than face
We extend our experiments to include tasks other than facial recognition. Particularly, we perform our attack on MNIST dataset. We use identical attack model to the main experiments, in which the attacker only gets access to the target model's decision.  We also assume that that the attacker has access to a prior knowledge (public dataset). In our case, these are the digits from ``5" to ``9". The attacker's goal is to infer information on the private dataset (i.e., the digits from ``0" to ``4"). This is not trivial since the public knowledge available to the attacker contains only 5 classes which makes it harder for the attacker to generalize to other classes. Nevertheless, \AlgName was able to successfully attack 4 out of the 5 private classes. 

Similar to the main experiments, we train two models. The target model is a network with 2 CNN layers followed by two linear layers and the evaluation classifier has 3 CNN layers followed by two linear layers. As shown in Fig.\ref{fig:mnist-1}, the initial image generated by GAN for class ``0" resembles class ``6". This is expected, since the attacker has only knowledge of classes from ``5" to ``9". The final output however, looks more similar to ``0". Similar behavior can be seen when attacking class ``4". For classes ``2" and ``3", the initial images were noises, yet the we were still able to successfully attack class ``3".
%talk about models for mnist, private-public datasets
%talk about why mnist is harder (unlike face images, where GAn is trained on hundreds of different faces, GAN here is trained on only 5 shapes (from 5 to 9)
%comment on visual result. we see it convereged to fig similar to 0 when the init was 6. also for 4 the init image was 9 byt it converted t 4. for 2,3 it was a noisy image from GAN yet, the \AlgName was able to convert approach 3 easily.
\begin{figure}[!htbp]
  \centering
  \includegraphics[width=0.65\linewidth]{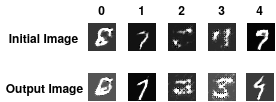}
  \caption{Performing \AlgName on MNIST dataset. The output is shows for each private label (digit).}
  \label{fig:mnist-1}
\end{figure}

\end{document}